\pgfplotsset{compat=1.13}
\newcommand{\JJoin}{\textsc{AddEq}\xspace}
\newcommand{\UnJoin}{\textsc{DelEq}\xspace}
\newcommand{\Const}{\textsc{DelVar}\xspace}
\newcommand{\UnConst}{\textsc{AddVar}\xspace}
\newcommand{\AddRel}{\textsc{AddConstr}\xspace}
\newcommand{\DelRel}{\textsc{DelConstr}\xspace}
\newcommand{\stdhspace}{\hspace{5em}}
\DeclarePairedDelimiter{\abs}{\lvert}{\rvert}
  \providecommand\BibTeX{{%
    \normalfont B\kern-0.5em{\scshape i\kern-0.25em b}\kern-0.8em\TeX}}}
\begin{document}
\setcopyright{none}
\settopmatter{printacmref=false,printccs=false,printfolios=true}

\title{Incremental Updates of Generalized Hypertree Decompositions}

\author{Georg Gottlob}
\orcid{0000-0002-2353-5230}
\author{Matthias Lanzinger}
\orcid{0000-0002-7601-3727}
\affiliation{%
  \institution{University of Oxford}
  \city{Oxford}
  \country{UK}
}
\email{georg.gottlob@cs.ox.ac.uk}
\email{matthias.lanzinger@cs.ox.ac.uk}

\author{Davide Mario Longo}
\orcid{0000-0003-4018-4994}
\author{Cem Okulmus}
\orcid{0000-0002-7742-0439}

\affiliation{%
  \institution{TU Wien}
  \city{Vienna}
  \country{Austria
}}

\email{dlongo@dbai.tuwien.ac.at}
\email{cokulmus@dbai.tuwien.ac.at}

\renewcommand{\shortauthors}{Gottlob, Lanzinger, Longo, and Okulmus}

\begin{abstract}
    Structural decomposition methods, such as generalized hypertree decompositions, have been successfully used for solving constraint satisfaction problems (CSPs). As decompositions can be reused to solve CSPs with the same constraint scopes, investing resources in computing good decompositions is beneficial, even though the computation itself is hard.
    Unfortunately, current methods need to compute a completely new decomposition even if the scopes change only slightly.
    In this paper, we make the first steps toward solving the problem of updating the decomposition of a CSP $P$ so that it becomes a valid decomposition of a new CSP $P'$ produced by some modification of $P$.
    Even though the problem is hard in theory, we propose and implement a framework for effectively updating GHDs.
    The experimental evaluation of our algorithm strongly suggests practical applicability.
\end{abstract}

\begin{CCSXML}
	<ccs2012>
	<concept>
	<concept_id>10003752.10003809</concept_id>
	<concept_desc>Theory of computation~Design and analysis of algorithms</concept_desc>
	<concept_significance>500</concept_significance>
	</concept>
	<concept>
	<concept_id>10003752.10010070.10010111.10011711</concept_id>
	<concept_desc>Theory of computation~Database query processing and optimization (theory)</concept_desc>
	<concept_significance>300</concept_significance>
	</concept>
	<concept>
	<concept_id>10003752.10003790.10003795</concept_id>
	<concept_desc>Theory of computation~Constraint and logic programming</concept_desc>
	<concept_significance>300</concept_significance>
	</concept>
	</ccs2012>
\end{CCSXML}

\ccsdesc[500]{Theory of computation~Design and analysis of algorithms}
\ccsdesc[300]{Theory of computation~Database query processing and optimization (theory)}
\ccsdesc[300]{Theory of computation~Constraint and logic programming}

\keywords{constraint satisfaction, hypergraphs, structural decomposition}

\maketitle

\section{Introduction}

Constraint satisfaction problems (CSPs) are fundamental in modeling many problems of Artificial Intelligence and other areas of Computer Science.
While satisfiability checking of  CSPs is generally \np-hard~\cite{DBLP:conf/stoc/Schaefer78}, the structure of constraints plays a crucial role in their resolution.
This can be represented by a \emph{hypergraph} $H=(V(H), E(H))$, which consists of a set of vertices $V(H)$ and a set of edges $E(H)$ with $E(H) \subseteq 2^{V(H)}$.
Intuitively, the vertices of $H$ correspond to variables and the edges of $H$ group together variables appearing in the same constraint.
It is well known that checking if a CSP is satisfiable is tractable for all CSPs that have an underlying acyclic hypergraph~\cite{DBLP:conf/vldb/Yannakakis81}.

Larger islands of tractability have been discovered by generalizing the concept of hypergraph acyclicity.
In this direction, \emph{hypergraph decompositions} and their associated \emph{width} proved to be essential concepts~\cite{DBLP:journals/ai/GottlobLS00}.
In this work we focus on \emph{generalized hypertree decompositions (GHDs)} and \emph{generalized hypertree width ($\ghw$)}~\cite{DBLP:journals/jcss/GottlobLS02}.
The $\ghw$ of a hypergraph $H$ intuitively measures its degree of cyclicity and a GHD of $H$ can be used to solve the related CSP (definitions in Section~\ref{sec:preliminaries}).
Acyclic hypergraphs have $\ghw = 1$. Furthermore, CSPs with bounded $\ghw$ can be solved in polynomial time~\cite{DBLP:journals/jcss/GottlobLS02}.

Computing a GHD of width $\leq k$ is \np-hard for any $k \geq 2$~\cite{DBLP:conf/pods/FischlGP18,DBLP:journals/jacm/GottlobMS09},
but certain properties of hypergraphs, like bounded intersection width and bounded multi-intersection width, make the problem tractable~\cite{DBLP:conf/pods/FischlGP18,DBLP:journals/jacm/GottlobLPR21}.
This set off a quest for efficient algorithms, which led to implementations based on different principles.
The aforementioned intersection widths were exploited in~\cite{10.1145/3440015,DBLP:conf/ijcai/GottlobOP20} for the implementation of several sequential and parallel algorithms for computing GHDs.
Alternative characterizations of width have been used for even more general forms of decompositions in~\cite{DBLP:conf/cp/FichteHLS18,DBLP:journals/jea/KorhonenBJ19,DBLP:conf/alenex/SchidlerS20}.
Furthermore, the fixed-parameter tractability of the problem has been explored extensively in theory and practice but is outside of the scope of this paper~\cite{DBLP:journals/jacm/Grohe07,DBLP:journals/jacm/Marx13,DBLP:conf/pods/KhamisNR16,DBLP:conf/ijcai/ChenGLP20}.

Decompositions have also been employed in commercial systems and research prototypes, both for CSPs and query answering in databases~\cite{DBLP:conf/sigmod/AbergerTOR16,DBLP:journals/aicom/AmrounHA16,DBLP:conf/sigmod/ArefCGKOPVW15,DBLP:journals/jetai/HabbasAS15,DBLP:conf/aiia/LalouHA09}.
In particular, in~\cite{DBLP:conf/sigmod/AbergerTOR16} GHDs of low width significantly speed up query answering.
It is thus worth investing resources in computing a GHD of low width.
On the other hand, this is a hard task and we want to avoid the computation of a new GHD, whenever possible.
Unfortunately, a new decomposition must be computed even if the CSP slightly changes, resulting in a loss on the investment.
This is the case in the setting of \emph{incremental constraint satisfaction}, where constraint solvers handle mutable sets of variables~\cite{DBLP:conf/ijcai/Seidel81} or constraints~\cite{DBLP:journals/cacm/Freeman-BensonMB90}, as well as for other scenarios.

Consider a user modeling a problem in the context of interactive problem solving.
In this case, the user interactively models a problem and needs prompt feedback on the effect of her modifications on the resolution process.
While investigating alternatives, information about the impact on the decomposition is shown, i.e., an estimation of the computational effort of solving the problem.
Similarly, a \emph{compositional modeling problem} consists in synthesizing the most appropriate model of a physical system for a given analytical query~\cite{DBLP:journals/ai/FalkenhainerF91}.
The construction of the ``best'' model passes through several phases in which the model is iteratively refined by modifying constraints.
Here, support during the modeling process is needed.

In the rest of the paper, we will use the crossword puzzles in Figure~\ref{fig:crossword} for our examples.
Given a puzzle, we want to fill every contiguous horizontal or vertical line of white cells with words from a certain set.
The puzzles are CSPs in which each cell is a variable and there is a constraint over the white cells belonging to the same line.
Suppose we want to solve the puzzle $P$, and then the slightly modified puzzle $P'$ with the help of GHDs.
Although the two puzzles resemble each other, their resolution requires different decompositions.
In particular, even though the hypergraphs of $P$ and $P'$ share a significant part, we have to compute a new GHD to solve $P'$.
Intuitively, it should be possible to obtain a GHD for $P'$ by slightly modifying the already-computed GHD of $P$.
Thus, the question arises naturally: can we reuse the first GHD and adjust only the parts affected by the modification?

\begin{figure}[t]
    \centering
    \subfigure[Puzzle $P$.]{\begin{tikzpicture}[scale=1.2]
	\draw[step=0.5cm] (0,0) grid (2.5,1.5);
	\node at (0.25,1.25)	{$a$};
	\node at (0.75,1.25)	{$b$};
	\node at (1.25,1.25)	{$c$};
	\node at (0.25,0.75)	{$d$};
	\node at (1.25,0.75)	{$e$};
	\node at (0.25,0.25)	{$f$};
	\node at (0.75,+0.25)	{$g$};
	\node at (1.75,0.75)	{$i$};
	\node at (2.25,1.25)	{$j$};
	\node at (2.25,0.75)	{$k$};
	\node at (2.25,0.25)	{$l$};
	\fill [black] (0.5,0.5) rectangle (1,1);
	\fill [black] (1,0) rectangle (1.5,0.5);
	\fill [black] (1.5,0) rectangle (2,0.5);
	\fill [black] (1.5,1) rectangle (2,1.5);
\end{tikzpicture}}\stdhspace%
    \subfigure[Puzzle $P'$.]{
    	\begin{tikzpicture}[scale=1.2]
	\draw[step=0.5cm] (0,0) grid (2.5,1.5);
	\node at (0.25,1.25)	{$a$};
	\node at (0.75,1.25)	{$b$};
	\node at (1.25,1.25)	{$c$};
	\node at (0.25,0.75)	{$d$};
	\node at (1.25,0.75)	{$e$};
	\node at (0.25,0.25)	{$f$};
	\node at (0.75,+0.25)	{$g$};
	\node at (1.25,+0.25)	{$h$};
	\node at (1.75,0.75)	{$i$};
	\node at (2.25,1.25)	{$j$};
	\node at (2.25,0.75)	{$k$};
	\node at (2.25,0.25)	{$l$};
	\fill [black] (0.5,0.5) rectangle (1,1);
	\fill [black] (1.5,0) rectangle (2,0.5);
	\fill [black] (1.5,1) rectangle (2,1.5);
\end{tikzpicture}
    	\label{fig:cw-cyclic}
    }
    \caption{Two similar crossword puzzles $P$ and $P'$.
    	Given a set of words $W$, we want to fill every contiguous horizontal or vertical line of white cells with words from $W$.
    	If two lines intersect, the words assigned to these lines must intersect in the right positions.
    }
    \label{fig:crossword}
\end{figure}
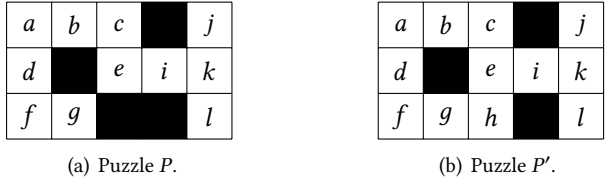

We investigate the problem of incrementally updating GHDs upon constraint modifications.
To this aim we study the behavior of GHDs when the CSP is modified and analyze the problem of computing a GHD of the modified CSP when a GHD of the original CSP is already available.
We also propose a set of typical and natural constraint modifications.
In this work we focus on elementary modifications of CSPs, i.e., changes such as binding a variable to a constant, the introduction of a new constraint, and enforcing equality between a set of variables.
For each modification, we also include its dual function.
Moreover, we briefly discuss how to extend our framework when we are confronted with sequences of updates.
In particular, our contributions are as follows:
\begin{itemize}
    \item We propose a framework for constraint modifications and describe their effect on the hypergraph as well.
    Moreover, we present the novel \minupdateProb{} problem.
    \item We resolve the complexity of \minupdateProb{} for a set of relevant elementary modifications.
    For most of these, the problem turns out to not be solvable in polynomial time under standard assumptions, and is therefore effectively just as difficult as computing a GHD de novo.
    \item Despite its complexity, the problem still offers room for practical solutions.
    To this end, we provide a general framework of \emph{mutable subtrees} for updating GHDs.
    This is devised to work under any kind of modification, thus making our approach universally applicable.
    \item We emphasize the practical applicability of our method by providing an implementation strategy
    that can be used for any top-down algorithm for computing GHDs.
    In this way, we can take advantage of the state-of-the-art solutions in the field.
    \item We extensively compare our method with classical algorithms. 
    Given a modification, we use our algorithm to update an existing decomposition, 
    and a reference classical algorithm to compute a GHD for the new hypergraph afresh.
    Results show that we significantly outperform classical methods for most classes of modifications.
    In particular, we achieve mean speed-ups between 6 and 50 over the reference algorithm.
\end{itemize}

The paper proceeds as follows.
In Section~\ref{sec:preliminaries} we introduce basic concepts.
In Section~\ref{sec:update-problem} we formally define the \minupdateProb{} problem and study its complexity.
In Section~\ref{sec:update-practice} we describe the framework of \magic{} subtrees and an actual implementation strategy for any GHD top-down algorithm.
In Section~\ref{sec:experiments} we present the results of the experimental evaluation of our methods by comparing it to classical algorithms for computing GHDs.
Finally, in Section~\ref{sec:conclusion} we draw conclusions and point to relevant questions for future work.

\section{Preliminaries}
\label{sec:preliminaries}

\subsection{CSPs and Hypergraphs}
A \emph{constraint satisfaction problem (CSP)} is a triple $\langle V,D,C_t \rangle$, where $V$ is a set of variables, $D$ is a set of values, and $C_t$ is a set of constraints.
A constraint $(s_i, r_i) \in C_t$ consists of a tuple of variables $s_i$ and a constraint relation $r_i$ containing valid combinations of values for the variables $s_i$.
A solution is a mapping from $V$ to $D$, s.t. for each $(s_i, r_i) \in C_t$ the variables $s_i$ are mapped to a legal combination of values in $r_i$.

A \emph{hypergraph} $H = (V(H), E(H))$ is a pair consisting of a set of vertices $V(H)$ and a set of non-empty (hyper)edges $E(H) \subseteq 2^{V(H)}$.
We assume w.l.o.g.\  that there are no isolated vertices, i.e., for each $v \in V(H)$, there is at least one edge $e \in E(H)$ such that $v \in e$.
We will often use $H$ to denote the set of edges $E(H)$.
A \emph{subhypergraph} $H'$ of $H$ is then simply a subset of (the edges of) $H$.
Given  $U \subseteq V(H)$ the \emph{induced subhypergraph of $H$ w.r.t. $U$} is the hypergraph $H[U]$ s.t. $V(H[U]) = U$ and $E(H[U]) = \{ e \cap U \mid e \in E(H) \}$.

Let $P = \langle V,D,C_t \rangle$ be a CSP, the hypergraph $H_P$ of $P$ is defined with $V(H_P) = V$ and $E(H_P) = \{s_i \mid (s_i, r_i) \in C_t\}$.

\begin{example}
	A crossword puzzle like the ones in Figure~\ref{fig:crossword} can be represented as a CSP.
	Each cell of the puzzle is a variable in $V$ and, for simplicity, the domain $D$ is the set of letters of the alphabet.
	Given a relation of words $W$ with characters in $D$, the set $C_t$ of constraints contains a constraint $c_i$ for each contiguous horizontal or vertical line of white cells that can be filled with appropriate words in $W$.
	For instance, consider the puzzle $P'$ of Figure~\ref{fig:cw-cyclic}.
	The constraint $c_1$ defined over the variables $w_1 = \langle a,b,c \rangle$ can take values in $r_1 = W$.
	If $W = \mathit{\{\langle d,o,g \rangle, \langle g,o,d \rangle, \langle o,d,d \rangle\}}$, then the assignment $ \langle a,b,c,d,e,f,g,h,i,j,k,l \rangle = \langle \mathrm{g,o,d,o,o,d,o,g,d,o,d,d} \rangle$ is a solution.
	
	The hypergraphs $H_P,H_{P'}$ underlying the puzzles $P,P'$ are shown in Figure~\ref{fig:hypergraphs}.
	The set of vertices of each hypergraph is the set of variables of the corresponding CSP, while the sets of edges match the set of constraint scopes of the related CSP.
\end{example}

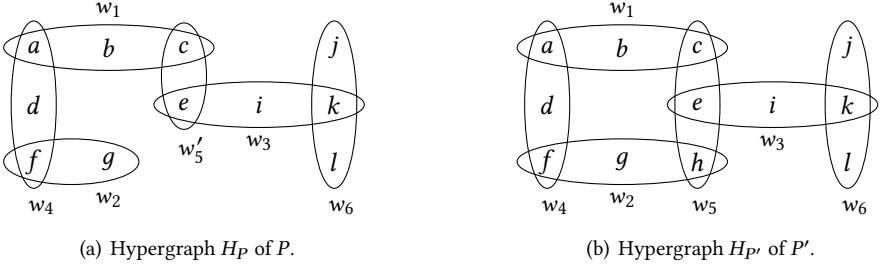
\begin{figure}[t]
	\centering
	\subfigure[Hypergraph $H_P$ of $P$.]{
		\begin{tikzpicture}[scale=1]
	\node	(a)	at	(0,1.5)	{$a$};
	\node	(b)	at	(1,1.5)	{$b$};
	\node	(c)	at	(2,1.5)	{$c$};
	\node	(d)	at	(0,.75)	{$d$};
	\node	(e)	at	(2,.75)	{$e$};
	\node	(f)	at	(0,0)	{$f$};
	\node	(g)	at	(1,0)	{$g$};
	\node	(i)	at	(3,.75)	{$i$};
	\node	(j)	at	(4,1.5)	{$j$};
	\node	(k)	at	(4,.75)	{$k$};
	\node	(l) at	(4,0)	{$l$};
	\draw (1,1.5) ellipse [x radius=1.4, y radius=0.3];
	\draw (0.5,0) ellipse [x radius=0.9, y radius=0.3];
	\draw (3,0.75) ellipse [x radius=1.4, y radius=0.3];
	\draw[rotate around={90:(0,.75)}] (0,.75) ellipse [x radius=1.1, y radius=0.3];
	\draw[rotate around={90:(2,1.125)}] (2,1.125) ellipse [x radius=0.7, y radius=0.3];
	\draw[rotate around={90:(4,.75)}] (4,.75) ellipse [x radius=1.1, y radius=0.3];
	\node	(w1)	at	(1,2)	{\small $w_1$};
	\node	(w2)	at	(1,-.5)	{\small $w_2$};
	\node	(w3)	at	(3,.25)	{\small $w_3$};
	\node	(w4)	at	(.1,-.6)	{\small $w_4$};
	\node	(w5)	at	(2.1,.15)	{\small $w_5'$};
	\node	(w6)	at	(4.1,-.6)	{\small $w_6$};
\end{tikzpicture}
		\label{fig:hg-acyclic}
	}\stdhspace%
	\subfigure[Hypergraph $H_{P'}$ of $P'$.]{
		\begin{tikzpicture}[scale=1]
	\node	(a)	at	(0,1.5)	{$a$};
	\node	(b)	at	(1,1.5)	{$b$};
	\node	(c)	at	(2,1.5)	{$c$};
	\node	(d)	at	(0,.75)	{$d$};
	\node	(e)	at	(2,.75)	{$e$};
	\node	(f)	at	(0,0)	{$f$};
	\node	(g)	at	(1,0)	{$g$};
	\node	(h)	at	(2,0)	{$h$};
	\node	(i)	at	(3,.75)	{$i$};
	\node	(j)	at	(4,1.5)	{$j$};
	\node	(k)	at	(4,.75)	{$k$};
	\node	(l) at	(4,0)	{$l$};
	\draw (1,1.5) ellipse [x radius=1.4, y radius=0.3];
	\draw (1,0) ellipse [x radius=1.4, y radius=0.3];
	\draw (3,0.75) ellipse [x radius=1.4, y radius=0.3];
	\draw[rotate around={90:(0,.75)}] (0,.75) ellipse [x radius=1.1, y radius=0.3];
	\draw[rotate around={90:(2,.75)}] (2,.75) ellipse [x radius=1.1, y radius=0.3];
	\draw[rotate around={90:(4,.75)}] (4,.75) ellipse [x radius=1.1, y radius=0.3];
	\node	(w1)	at	(1,2)	{\small $w_1$};
	\node	(w2)	at	(1,-.5)	{\small $w_2$};
	\node	(w3)	at	(3,.25)	{\small $w_3$};
	\node	(w4)	at	(.1,-.6)	{\small $w_4$};
	\node	(w5)	at	(2.1,-.6)	{\small $w_5$};
	\node	(w6)	at	(4.1,-.6)	{\small $w_6$};
\end{tikzpicture}
		\label{fig:hg-cy}
	}
	\caption{The hypergraphs corresponding to the two puzzles $P,P'$ of Figure~\ref{fig:crossword}.}
	\label{fig:hypergraphs}
\end{figure}

\subsection{Generalized Hypertree Decompositions}
We use $B(E)$ to denote the set of vertices of $H$ \emph{covered} by a certain set of edges $E$ of $H$.
More precisely, given a hypergraph $H = (V(H),E(H))$ and a set of edges $E \subseteq E(H)$, we define $B(E) = \bigcup_{e \in E} e$  as the set of all vertices of $H$ contained in the set of edges $E$.

A \emph{generalized hypertree decomposition} (GHD)~\cite{DBLP:journals/jcss/GottlobLS02} of a hypergraph  $H=(V(H),E(H))$ is a tuple 
$\left< T, (B_u)_{u\in T}, (\lambda_u)_{u\in T} \right>$ where $T = (N(T),E(T))$ is a tree, every $B_u$
is a subset of $V(H)$, every $\lambda_u$ is a subset of $E(H)$, and the following hold:
\begin{enumerate}
	\item[(1)] For every edge $e \in E(H)$, there is a node $u$ in $T$, such that $e \subseteq B_u$, and
	\item[(2)] for every vertex $v \in V(H)$,  $\{u \in T \mid v \in B_u\}$ is a connected subtree in $T$, and
	\item[(3)] for each $u\in T$,  $B_u \subseteq  B(\lambda_u)$ holds.
\end{enumerate}

We refer to the vertex sets $B_u$ as the \emph{bags} of the GHD, while we call the edge sets
$\lambda_u$ \emph{edge covers}. By slight abuse of notation, we write $u \in T$ to express that $u$ is a node in $N(T)$.
Condition (2) is also called the \emph{connectedness condition}.

We use the following notational conventions. To avoid confusion, we will
consequently refer to the elements in $V(H)$ as {\em vertices\/} of the hypergraph and to the
elements in $N(T)$ as the {\em nodes\/} of the decomposition. For a node $u \in T$, we write $T_u$
to denote the subtree of $T$ rooted at $u$. By slight abuse of notation, we will often write $u' \in
T_u$ to denote that $u'$ is a node in the subtree $T_u$ of $T$.
Finally, we define $V(T_u) = \bigcup_{u' \in T_u} B_{u'}$.

The \emph{width} of a GHD is defined as the largest size of any set $\lambda_u$ over all nodes $u \in T$.
The generalized hypertree width of $H$ ($\ghw(H)$) is the minimum width over all GHDs of $H$.

\begin{example}
	A GHD for the hypergraph $H_{P'}$ of Figure~\ref{fig:hg-cy} is shown in Figure~\ref{fig:puzzle-cy-ghd}.
	It is easy to check that conditions (1)-(3) are satisfied.
	This GHD has width 2 because there is at least one node of the decomposition with $\lvert \lambda_u \rvert = 2$.
	It is also possible to prove that $\ghw(H_{P'}) = 2$. Indeed, the hypergraph $H_{P'}$ has a cycle and, thus, it cannot have width 1.
	On the contrary, the hypergraph $H_P$ is acyclic and $\ghw(H_{P}) = 1$ holds.
\end{example}

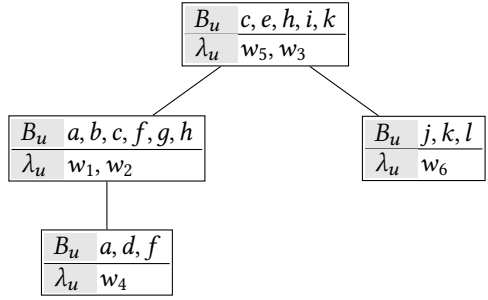
\begin{figure}
	\centering
	\tikzstyle{htmatrix}=[%
		matrix,
		matrix of nodes,
		nodes={draw=none},
		column 1/.style={nodes={fill=gray!20,align=right},anchor=base east,minimum width=2em},
		column 2/.style={anchor=base west},
		ampersand replacement=\&
	]
	
	\begin{tikzpicture}[sibling distance=12em,
		every node/.style = {shape=rectangle,draw,inner sep=1,align=center}
		]
		\node (root) [htmatrix] {\htnode{$c,e,h,i,k$}{$w_5,w_3$}}
		child {
			node (c1) [htmatrix] {\htnode{$a,b,c,f,g,h$}{$w_1,w_2$}} 
			child {
				node (c2) [htmatrix]{\htnode{$a,d,f$}{$w_4$}}
			}
		}
		child {
			node (c3) [htmatrix] {\htnode{$j,k,l$}{$w_6$}}
		};
	\end{tikzpicture}
		
	\caption{A GHD of width 2 for the hypergraph $H_{P'}$ of Figure~\ref{fig:hg-cy}.}
	\label{fig:puzzle-cy-ghd}
\end{figure}

\subsection{Top-Down Construction of GHDs}
\label{sect:ComputingHDs}

For a set $U \subseteq V(H)$ of vertices, we define $[U]$-components of a hypergraph $H$ through the following steps.
\begin{itemize}
	\item We define {\em $[U]$-adjacency\/} as a binary relation on $E(H)$ as follows: 
	two edges $e_1$ and $e_2$ are {\em $[U]$-adjacent\/}, if $(e_1 \cap e_2) \setminus U \neq \emptyset$ holds.
	\item We define {\em $[U]$-connectedness\/} as the transitive closure of the {\em $[U]$-adjacency\/} relation.
	\item A {\em $[U]$-component\/} of $H$ is a maximally $[U]$-connected subset $C \subseteq E(H)$.
\end{itemize}
For a set of edges $S \subseteq E(H)$, we say that $C$ is \emph{$[S]$-connected} as a short-cut for $C$ is \emph{$[U]$-connected} with $U = \bigcup_{e \in S} e$.
We also call $S$ a \emph{separator} and $C$ an \emph{$[S]$-component}.
The \emph{size of an $[S]$-component} $C$ is defined as the number of edges in $C$.

\SetAlCapFnt{\large}
\SetAlCapNameFnt{\large}

\begin{algorithm}[tb]
\DontPrintSemicolon
\SetKwInOut{KwPara}{Parameter}
\SetKwInput{KwType}{Type}

\KwType{Node=(Bag: Vertex Set, Edge Cover: Set of Edges, Children: Set of Nodes)}
\KwIn{$H$: Hypergraph}
\KwPara{\ $k$: width parameter}
\KwOut{a GHD of $H$ with width $\leq k$, or \textbf{Reject} if none exists}

 \SetKwFunction{algo}{FindDecomp} 
 \SetKwFunction{getComp}{ComputeComponents}
 \SetKwFunction{checkScenery}{SceneCreation}
 \SetKwFunction{windUp}{SceneCreationUp}
 \SetKwFunction{windDown}{SceneCreationDown}
 \SetKwFunction{buildGHD}{BuildGHD}
 \SetKwFunction{decomp}{$D$}
 \SetKwProg{myalg}{function}{}{}
 \Begin{\textbf{return} \algo{$H$} \Comment{initial call} \\ }
\myalg{\algo{$H'$: Subgraph} }{

    \For(\Comment{iterate over set of possible bags relative to $H'$ with cover number $\leq k$}){ \texttt{sep }$\in \text{Separators}(H',k)$   } { 

    \label{line:sepIter}
    \texttt{cover} $\coloneqq $ find some \texttt{cover} $\subseteq E(H)$ 
    such that \texttt{sep} $ \subseteq B(\text{cover})$ and $\abs{\text{\texttt{cover}}} \leq k$ \;
    
    \texttt{comps} $\coloneqq $ $[sep]$-components of $H'$  \;
    \label{line:compGen}
    \texttt{children} $\coloneqq \emptyset$ \Comment{initialize the set of children} \\

    \For{ $c\in$ \texttt{comps}} { 
        \label{line:schemaLoopStart}
        \If {  \algo{$c$}  returns \textbf{Reject}} {
            \textbf{continue} outer loop
        }

        \texttt{children} $=$ \texttt{children}  $\cup $ \algo{$c$} \Comment{recursive call to compute subtrees} \\
      }     
      \label{line:schemaLoopEnd}

    \textbf{return} Node(Bag : \texttt{sep}, Edge Cover: \texttt{cover}, Children: \texttt{children})
    \label{line:ReturnGHD}

    }
    \textbf{return} \textbf{Reject} \Comment{reject if no valid separator could be found} \\
}

\caption{A schematic top-down GHD algorithm}
\label{alg:topdown}
\end{algorithm}

Many top-down GHD algorithms such as~\cite{10.1145/3440015,DBLP:conf/ijcai/GottlobOP20,DBLP:journals/jea/GottlobS08} can be described by the same schema.
Here we simply refer to a generic top-down schema with the understanding that each implementation presents its own peculiarities. A pseudo-code description of such a schematic top-down GHD algorithm is given in Algorithm~\ref{alg:topdown}.

For a fixed $k \geq 1$, Algorithm~\ref{alg:topdown} takes as input a hypergraph $H$ and builds a GHD of $H$ of width $\leq k$ or rejects if none can be found.
Each call of the function \texttt{FindDecomp} is over a particular set of edges, referred to as $H'$. This recursive function forms the core of this schematic description. A recursive function is chosen for notational simplicity, a real implementation could also just use nested loops or other ways to implement iteration. 
The function \texttt{FindDecomp} iterates over a set of valid separators, as seen in line~\ref{line:sepIter}. Note that the particulars of how this set is computed can vary greatly between specific implementations. The algorithm picks a vertex set \texttt{sep} from this set  and then finds a subset of edges \texttt{cover} such that $\abs{\texttt{cover}} \leq k$ and $\text{\texttt{sep}} \subseteq B(\text{\texttt{cover}})$. These two will form the bag (resp. edge cover) of the root node $u$ of the GHD found for subgraph $H'$.

Next, in line 6, it proceeds to determine all $[\text{\texttt{sep}}]$-components of $H'$, which we denote as $C_1, \dots, C_\ell$.
It was shown in~\cite{DBLP:journals/jcss/GottlobLS02} that if $H$ has a GHD of width $\leq k$, then it also has a
GHD of width  $\leq k$ such that the edges in each $C_i$ are ``covered'' in separate and distinct
subtrees below $u$.
We say that such a GHD is in \emph{normal form}.
More precisely, ``covered'' means that $u$ has $\ell$ child nodes $u_1,
\dots, u_\ell$, s.t. for every $i$ and every $e \in C_i$, there exists a node $u_e \in T_{u_i}$
with $e \subseteq B_{u_e}$. Hence, the algorithm recursively searches for a GHD of the
subhypergraphs $H_i$ with $E(H_i) = C_i$ and   $V(H_i) = \bigcup C_i$. We can see this inside function \texttt{FindDecomp}  in lines \ref{line:schemaLoopStart} to \ref{line:schemaLoopEnd}. If all recursive calls succeed, the function terminates by forming a GHD with root $u$ and subtrees covering the components $C_i$, seen in line~\ref{line:ReturnGHD}.

\begin{example}\label{ex:ghd-decomp}
	We decompose the hypergraph $H_{P'}$ of Figure~\ref{fig:hg-cy} so to obtain the GHD shown in Figure~\ref{fig:puzzle-cy-ghd}.
	For $k=2$, the generic top-down algorithm $\mathcal{A}$ takes in input $H_{P'}$ and computes a GHD of $H_{P'}$ of width 2, if it exists.
	Firstly, $\mathcal{A}$ guesses a separator of size $\leq 2$ with edges in $C = E(H_{P'})$, which will be used as the edge cover $\lambda_u$ for the root node $u$ of the GHD.
	Suppose that $\lambda_u = \{w_5,w_3\}$ and $B_u = \{c,e,h,i,k\}$ is a suitable choice for the bag.
	Then the root $u$ of the decomposition will be exactly the same of Figure~\ref{fig:puzzle-cy-ghd}.
	At this point, $\mathcal{A}$ computes the $[\lambda_u]$-components $C_1,C_2 \subseteq C$ with $C_1 = \{w_1,w_2,w_4\}$, $C_2 = \{w_6\}$ (Figure~\ref{fig:components}).
	Each $C_i$ is now recursively decomposed.
	Since $\lvert C_2 \rvert \leq 2$, it can be ``covered'' by a single node of the decomposition.
	On the contrary, $\lvert C_1 \rvert > 2$, thus $\mathcal{A}$ starts a recursive call on $C_1$ and guesses the separator $S = \{w_1,w_2\}$ and computes the new bag $\{a,b,c,f,g,h\}$.
	As the component $C_1$ is split w.r.t. $S$, we obtain a single component $C_3 \subseteq C_1$, with $C_3 = \{w_4\}$.
	This component can be ``covered'' by a single node.
	Finally, all nodes are attached to their respective fathers except for the root.
	The resulting decomposition is returned.
\end{example}

\begin{figure}[t]
	\centering
	\subfigure[Component $C_1$.]{
		\begin{tikzpicture}[scale=1]
			\node	(a)	at	(0,1.5)	{$a$};
			\node	(b)	at	(1,1.5)	{$b$};
			\node	(c)	at	(2,1.5)	{$c$};
			\node	(d)	at	(0,.75)	{$d$};
			\node[text opacity=.5]	(e)	at	(2,.75)	{$e$};
			\node	(f)	at	(0,0)	{$f$};
			\node	(g)	at	(1,0)	{$g$};
			\node	(h)	at	(2,0)	{$h$};
			\node[text opacity=.5]	(i)	at	(3,.75)	{$i$};
			\node[text opacity=.5]	(j)	at	(4,1.5)	{$j$};
			\node[text opacity=.5]	(k)	at	(4,.75)	{$k$};
			\node[text opacity=.5]	(l) at	(4,0)	{$l$};
			\draw (1,1.5) ellipse [x radius=1.4, y radius=0.3];
			\draw (1,0) ellipse [x radius=1.4, y radius=0.3];
			\draw[opacity=.5,dashed] (3,0.75) ellipse [x radius=1.4, y radius=0.3];
			\draw[rotate around={90:(0,.75)}] (0,.75) ellipse [x radius=1.1, y radius=0.3];
			\draw[rotate around={90:(2,.75)},opacity=.5,dashed] (2,.75) ellipse [x radius=1.1, y radius=0.3];
			\draw[rotate around={90:(4,.75)},opacity=.5,dashed] (4,.75) ellipse [x radius=1.1, y radius=0.3];
			\node	(w1)	at	(1,2)	{\small $w_1$};
			\node	(w2)	at	(1,-.5)	{\small $w_2$};
			\node	(w4)	at	(.1,-.6)	{\small $w_4$};
		\end{tikzpicture}
	}\stdhspace%
	\subfigure[Component $C_2$.]{
		\begin{tikzpicture}[scale=1]
			\node[text opacity=.5]	(a)	at	(0,1.5)	{$a$};
			\node[text opacity=.5]	(b)	at	(1,1.5)	{$b$};
			\node[text opacity=.5]	(c)	at	(2,1.5)	{$c$};
			\node[text opacity=.5]	(d)	at	(0,.75)	{$d$};
			\node[text opacity=.5]	(e)	at	(2,.75)	{$e$};
			\node[text opacity=.5]	(f)	at	(0,0)	{$f$};
			\node[text opacity=.5]	(g)	at	(1,0)	{$g$};
			\node[text opacity=.5]	(h)	at	(2,0)	{$h$};
			\node[text opacity=.5]	(i)	at	(3,.75)	{$i$};
			\node	(j)	at	(4,1.5)	{$j$};
			\node	(k)	at	(4,.75)	{$k$};
			\node	(l) at	(4,0)	{$l$};
			\draw[opacity=.5,dashed] (1,1.5) ellipse [x radius=1.4, y radius=0.3];
			\draw[opacity=.5,dashed] (1,0) ellipse [x radius=1.4, y radius=0.3];
			\draw[opacity=.5,dashed] (3,0.75) ellipse [x radius=1.4, y radius=0.3];
			\draw[rotate around={90:(0,.75)},opacity=.5,dashed] (0,.75) ellipse [x radius=1.1, y radius=0.3];
			\draw[rotate around={90:(2,.75)},opacity=.5,dashed] (2,.75) ellipse [x radius=1.1, y radius=0.3];
			\draw[rotate around={90:(4,.75)}] (4,.75) ellipse [x radius=1.1, y radius=0.3];
			\phantom{\node	(w1)	at	(1,2)	{\small $w_1$};}
			\node	(w6)	at	(4.1,-.6)	{\small $w_6$};
		\end{tikzpicture}
	}
	\caption{The two components $C_1, C_2$ of Example~\ref{ex:ghd-decomp} obtained by removing $\{w_5,w_3\}$ from $H_{P'}$.}
	\label{fig:components}
\end{figure}
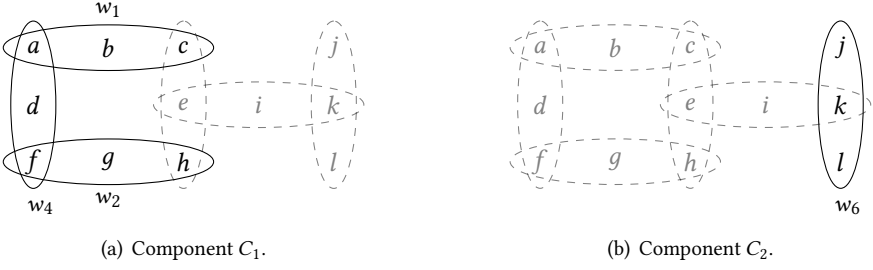

\section{The GHD Update Problem}
\label{sec:update-problem}
We introduce the problem of updating GHDs upon constraint modifications.
Here we propose important classes of elementary modifications and describe their effect on both the hypergraph and the CSP.
Then, we formally define the update problem and settle its complexity.

\subsection{Elementary Constraint Modifications}
\label{sec:mods}
The modification of a CSP affects its underlying hypergraph and consequently its GHDs.
We define a framework that allows us to update the GHD of a CSP in the face of constraint modifications.
First of all, we define what a modification is.

\begin{definition}
	\label{def:mod}
	Given a hypergraph $H$, a modification is a function mapping hypergraphs to hypergraphs.
\end{definition}

We identify three fundamental hypergraph objects for the computation of a GHD: vertices, edges, and intersections between edges.
As modifying a CSP typically implies the modification of these objects, we define six classes of modifications that reflect elementary changes on a hypergraph.
In this sense, the proposed classes of modifications are natural, even though not necessarily minimal.
We show the effect of these modifications on the hypergraph and intuitively explain their correspondence with the related CSP.
On the other hand, we ignore all those CSP modifications that do not change the hypergraph.
For instance, changing constraint relations does not affect the CSP structure, therefore the same GHD can be reused to solve the modified CSP again.
We start with two classes for vertex modifications.
\begin{definition}
	\UnConst{} is the class of modifications $\delta$ such that for every hypergraph $H$:
	\begin{itemize}
		\item $V(\delta(H)) \coloneqq V(H) \cup w$,
		\item $E(\delta(H)) \coloneqq E \cup \{e' \cup \{w\} \mid e' \in E' \}$, where $(E,E')$ is a partition of $E(H)$ and $E' \neq \emptyset$.
	\end{itemize}
	\Const{} is the class of modifications $\delta$ such that for every hypergraph $H$,
	$V(\delta(H)) \coloneqq V(H) \setminus \{v\}$ and $E(\delta(H)) \coloneqq \{ e \setminus \{v\} \mid e \in E(H) \}$.
\end{definition}
On the CSP level, \Const{} functions represent the binding of a CSP variable to a constant value and could possibly simplify its hypergraph.
On the other hand, \UnConst{} functions remove such binding, i.e., replace a constant with a variable.

Two classes are needed for edge insertion and removal.
\begin{definition}
	\AddRel{} is the class of modifications $\delta$ s.t. for every hypergraph $H$,
	$\delta(H) \coloneqq H \cup f$, where $f \notin E(H)$ is a new edge.
	\DelRel{} is the class of modifications $\delta$ s.t. for every hypergraph $H$,
	$\delta(H) \coloneqq H \setminus e$, where $e \in E(H)$. 
\end{definition}
\AddRel{} and \DelRel{} modifications correspond to alterations of the set of constraints $C_t$ of a CSP.
In particular, $\delta \in \AddRel{}$ introduces a new constraint in $C_t$, while $\delta \in \DelRel{}$ removes a constraint from $C_t$.

Finally, we present classes to modify intersections between edges.
Let $H$ be a hypergraph. Given $U \subseteq V(H)$, we denote with $E_U = \{ e \in E(H) \mid e \cap U \neq \emptyset\}$ the edges incident on $U$.
\begin{definition}
	\JJoin{} is the class of modifications $\delta$ such that for every hypergraph $H$,
	some vertices $U \subseteq V(H)$ are merged into $w \in V(\delta(H))$ and the edges in $E_U$ are incident on $w$.
	\UnJoin{} is the class of modifications $\delta$ such that for every hypergraph $H$,
	a vertex $w \in V(H)$ is split into a set $U \subseteq V(\delta(H))$ and the edges in $E_{\{w\}}$ are arbitrarily distributed on $U$.
\end{definition}
Intuitively, an \JJoin{} modification introduces an equality constraint between some variables of the CSP.
In other words, a new \emph{AllEqual} constraint is defined over a set of variables of the CSP.
On the other hand, \UnJoin{} modifications remove this kind of constraint and thus all equalities between a specific set of variables.

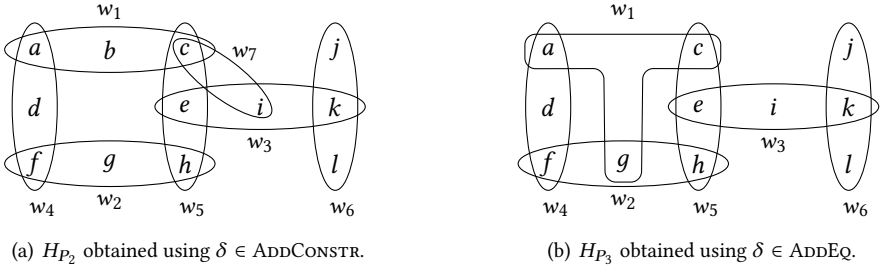
\begin{figure}[t]
	\centering
	\subfigure[$H_{P_2}$ obtained using $\delta \in \AddRel{}$.]{
		\begin{tikzpicture}[scale=1]
	\node	(a)	at	(0,1.5)	{$a$};
	\node	(b)	at	(1,1.5)	{$b$};
	\node	(c)	at	(2,1.5)	{$c$};
	\node	(d)	at	(0,.75)	{$d$};
	\node	(e)	at	(2,.75)	{$e$};
	\node	(f)	at	(0,0)	{$f$};
	\node	(g)	at	(1,0)	{$g$};
	\node	(h)	at	(2,0)	{$h$};
	\node	(i)	at	(3,.75)	{$i$};
	\node	(j)	at	(4,1.5)	{$j$};
	\node	(k)	at	(4,.75)	{$k$};
	\node	(l) at	(4,0)	{$l$};
	\draw (1,1.5) ellipse [x radius=1.4, y radius=0.3];
	\draw (1,0) ellipse [x radius=1.4, y radius=0.3];
	\draw (3,0.75) ellipse [x radius=1.4, y radius=0.3];
	\draw[rotate around={90:(0,.75)}] (0,.75) ellipse [x radius=1.1, y radius=0.3];
	\draw[rotate around={90:(2,.75)}] (2,.75) ellipse [x radius=1.1, y radius=0.3];
	\draw[rotate around={90:(4,.75)}] (4,.75) ellipse [x radius=1.1, y radius=0.3];
	\draw[rotate around={-36.9:(2.5,1.125)}] (2.5,1.125) ellipse [x radius=.8, y radius=0.25];
	\node	(w1)	at	(1,2)	{\small $w_1$};
	\node	(w2)	at	(1,-.5)	{\small $w_2$};
	\node	(w3)	at	(3,.25)	{\small $w_3$};
	\node	(w4)	at	(.1,-.6)	{\small $w_4$};
	\node	(w5)	at	(2.1,-.6)	{\small $w_5$};
	\node	(w6)	at	(4.1,-.6)	{\small $w_6$};
	\node	(w7)	at	(2.8,1.45)	{\small $w_7$};
\end{tikzpicture}
		\label{fig:mod-add-edge}
	}\stdhspace%
	\subfigure[$H_{P_3}$ obtained using $\delta \in \JJoin{}$.]{
		\begin{tikzpicture}[scale=1]
	\node	(a)	at	(0,1.5)	{$a$};
	\node	(b)	at	(1,1.5)	{\phantom{$b$}};
	\node	(c)	at	(2,1.5)	{$c$};
	\node	(d)	at	(0,.75)	{$d$};
	\node	(e)	at	(2,.75)	{$e$};
	\node	(f)	at	(0,0)	{$f$};
	\node	(g)	at	(1,0)	{$g$};
	\node	(h)	at	(2,0)	{$h$};
	\node	(i)	at	(3,.75)	{$i$};
	\node	(j)	at	(4,1.5)	{$j$};
	\node	(k)	at	(4,.75)	{$k$};
	\node	(l) at	(4,0)	{$l$};
	%
	\draw (1,0) ellipse [x radius=1.4, y radius=0.3];
	\draw (3,0.75) ellipse [x radius=1.4, y radius=0.3];
	\draw[rotate around={90:(0,.75)}] (0,.75) ellipse [x radius=1.1, y radius=0.3];
	\draw[rotate around={90:(2,.75)}] (2,.75) ellipse [x radius=1.1, y radius=0.3];
	\draw[rotate around={90:(4,.75)}] (4,.75) ellipse [x radius=1.1, y radius=0.3];
	\draw[rounded corners] ($(a.north west) + (-.1,0)$) -- ($(c.north east) + (.1,0)$) -- ($(c.south east) + (.1,-.05)$)
		-- ($(b.south east) + (.05,0)$) -- ($(g.south east) + (.035,0)$) -- ($(g.south west) + (-.035,0)$)
		-- ($(b.south west) + (-.05,0)$) -- ($(a.south west) + (-.1,-.05)$)
		-- cycle;
	\node	(w1)	at	(1,2)	{\small $w_1$};
	\node	(w2)	at	(1,-.5)	{\small $w_2$};
	\node	(w3)	at	(3,.25)	{\small $w_3$};
	\node	(w4)	at	(.1,-.6)	{\small $w_4$};
	\node	(w5)	at	(2.1,-.6)	{\small $w_5$};
	\node	(w6)	at	(4.1,-.6)	{\small $w_6$};
\end{tikzpicture}
		\label{fig:mod-add-join}
	}
	\caption{Hypergraphs obtained by applying to $H_{P'}$ the modifications described in Example~\ref{ex:more-mods}.}
	\label{fig:more-mods}
\end{figure}

\begin{example}\label{ex:more-mods}
	The hypergraph $H_{P'}$ of Figure~\ref{fig:hg-cy} is obtained by applying a modification $\delta \in \UnConst{}$ to $H_P$ in Figure~\ref{fig:hg-acyclic}.
	In particular, $\delta$ adds a new vertex $h$ in the edges $\{f,g\}$ and $\{c,e\}$ as well as in $V(H_P)$.
	Note that $H_P$ can be obtained from $H_{P'}$ via a modification $\delta \in \Const{}$ removing $h$ from $V(H_{P'})$.
	
	Figure~\ref{fig:more-mods} shows two additional modifications of $H_{P'}$.
	The hypergraph $H_{P_2}$ of Figure~\ref{fig:mod-add-edge} is the result of a modification $\delta \in \AddRel{}$ introducing a new edge $\{c,i\}$,
	while $H_{P_3}$ of Figure~\ref{fig:mod-add-join} shows the effect on $H_{P'}$ of a $\delta \in \JJoin{}$ adding an \emph{AllEqual} constraint between the variables $b,g$ ($b$ is merged into $g$).
	Finally, $H_{P'}$ can be obtained through an appropriate inverse modification $\delta$ to $H_{P_2}$ and $H_{P_3}$ with $\delta \in \DelRel{}$ and $\delta \in \UnJoin{}$, respectively.
\end{example}

Note that the set of all considered elementary modifications is \emph{complete},
i.e., given two hypergraphs $H,H'$, there exists a sequence of modifications $\delta_1,\dots,\delta_\ell$ s.t. $H' = \delta_\ell(\cdots(\delta_1(H)))$.

\subsection{The Complexity of Updating GHDs}
\label{sec:complex}
Recall that checking $\ghw(H) \leq k$, and therefore computing a width $k$ GHD of $H$, is \np-hard even when $k>1$ is constant~\cite{DBLP:conf/pods/FischlGP18,DBLP:journals/jacm/GottlobMS09,DBLP:journals/jacm/GottlobLPR21}.
In the context of modifications this naturally presents the question of the complexity of the following task: given a hypergraph $H$ together with a minimal width GHD, as well as a modification $\delta$, find a GHD for $\delta(H)$ with the same width if one exists, or correctly identify that the width increased. Intuitively, the knowledge of a witness for $H$ could make the problem easier, in particular if $\delta$ is a simple modification. 
Formally, we extend the standard problem of checking $\ghw(H) \leq k$ for constant $k$ (see e.g.,~\cite{DBLP:conf/pods/FischlGP18}) by simply adding a modification (from some class of modifications $\Delta$) and a GHD of the original hypergraph to the input.

\begin{center}
	\begin{problemdef}[framed]{\minupdateProb{}($\Delta$)}
		Instance: & hypergraph $H$, modification $\delta \in \Delta$, a minimal width GHD of $H$ \\
		Output: & A GHD of $\delta(H)$ with width $\leq \ghw(H)$ if it exists\\ & or answer 'no' otherwise.
	\end{problemdef}
\end{center}

Importantly, \minupdateProb{} is a search problem rather than a decision problem. This is motivated from two sides. Our primary motivation stems from practical situation in which small, iterative updates are consistently made to some CSP and for which we want to maintain a low width GHD. Since the GHD is necessary to possibly exploit low width for solving the CSP, we are interested in the search problem rather than the decision problem.

The second motivating factor comes from the possibility of certain classes of modifications capturing other classes, i.e., if one can express some modification in a class $\Delta$ via a sequence of  modifications from another class $\Delta'$. Focusing purely on the decision problem makes it problematic to consider the complexity of sequences of updates, since we have no information on the complexity of obtaining the new input GHDs along the sequence of updates.  By studying the search problem instead we can make strong statements for such cases.

The complexity of search problems is a complex topic and the full theoretical framework is not necessary in our context here. Instead, we will be content with showing that even for the simple classes of atomic updates that were discussed previously (except \Const{}), \minupdateProb can not be solved in polynomial time. 
Note that \minupdateProb{} trivially reduces to the problem of finding an optimal GHD of $\delta(H)$ and all negative results therefore extend also to finding optimal GHDs under modifications.

\begin{theorem}
	\label{thm:hard}
	For $\Delta \in \{$\JJoin{}, \UnJoin{}, \UnConst{}, \AddRel{}, \DelRel{}$\}$,  \minupdateProb{}($\Delta$) cannot be solved in polynomial time (assuming $\ptime \neq \np$).
\end{theorem}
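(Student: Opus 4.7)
Proof Plan.

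I plan to reduce from the NP-hard decision problem ``$\ghw(H) \leq k$?'' for fixed $k \geq 2$~\cite{DBLP:conf/pods/FischlGP18,DBLP:journals/jacm/GottlobMS09} to \minupdateProb{}($\Delta$), for each of the five classes listed. The common ingredient in all five reductions is a \emph{padding gadget} $G_k$: a hypergraph on a vertex set disjoint from $V(H)$ with $\ghw(G_k) = k$ and an explicitly constructible optimal GHD --- for instance, the clique $K_{2k}$ represented by its pair edges, decomposed by a single bag whose edge cover is a perfect matching of size $k$. Vertex-disjointness ensures $\ghw(H' \cup G_k) = \max(\ghw(H'), k)$ for every $H'$ on $V(H)$, which pins the padded width to $k$ precisely when $\ghw(H') \leq k$ and lifts it strictly above $k$ otherwise.

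The cleanest case is \DelRel{} and admits a single-call reduction. Given $(H, k)$, set $H_0 = H \cup G_k \cup \{V(H)\}$, adding $V(H)$ as a universal edge. An explicit optimal GHD of $H_0$ of width $k$ is obtained by attaching the single-bag decomposition of $H \cup \{V(H)\}$ to the known optimal GHD of $G_k$. Letting $\delta \in \DelRel{}$ delete $V(H)$ yields $\delta(H_0) = H \cup G_k$ with $\ghw(\delta(H_0)) = \max(\ghw(H), k)$, which is at most $\ghw(H_0) = k$ iff $\ghw(H) \leq k$. A polynomial algorithm for \minupdateProb{}(\DelRel{}) would therefore decide GHW-check in polynomial time. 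For the four remaining classes, a single atomic modification cannot directly dismantle a pre-planted simplifying edge or vertex, so I would use polynomial-round Turing reductions: starting from $H^{(0)} = G_k$ with its known optimal GHD, the reduction iteratively applies modifications from $\Delta$ whose cumulative effect constructs $H \cup G_k$ alongside the gadget, feeding each oracle-returned GHD in as the next call's input, and concludes that $\ghw(H) \leq k$ iff every call succeeds.

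The main obstacle in these iterative reductions is that $\ghw$ is not monotone along such sequences --- removing the universal edge from a ``triangle plus universal edge'' raises $\ghw$ from $1$ to $2$, and by reversal, a poorly chosen order of modifications can transiently push the width above $k$ even when $\ghw(H) \leq k$ holds, causing an oracle call to fail spuriously. I plan to circumvent this with an adaptive schedule: at every step, try each remaining candidate modification in turn, accepting the first one the oracle approves. Correctness rests on a lemma I would prove by inspecting an optimal width-$k$ GHD of $H$ in BFS order: if $\ghw(H) \leq k$, the restriction of this GHD to the subtree of already-processed nodes certifies $\ghw \leq k$ for each intermediate hypergraph, so at every reachable intermediate state at least one candidate modification must be accepted by the oracle and the adaptive strategy cannot stall. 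The total number of oracle calls is polynomial in $|E(H)| + |V(H)|$, yielding the desired polynomial-time Turing reduction.
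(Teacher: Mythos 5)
Your \DelRel{} reduction is sound and is in the same spirit as the paper's own: both plant a single large ``simplifying'' edge whose deletion exposes a hard instance, and your disjoint clique padding $G_k$ correctly pins $\ghw(H_0)$ to $k$ (it also, usefully, keeps every hypergraph along a successful chain at width exactly $k$, so returned GHDs really are minimal-width inputs for subsequent calls). The difference is that the paper reduces from \textsc{3-Sat} via the specific hypergraph $H_\varphi$ of Gottlob et al., which has $\ghw$ $2$ iff $\varphi$ is satisfiable, rather than from generic $\ghw \leq k$ checking; for a single-modification reduction either source problem works.

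The genuine gap is in the other four classes. First, the plan of starting from $H^{(0)} = G_k$ and ``constructing $H \cup G_k$'' cannot even be set up for \UnConst{}, \JJoin{}, and \UnJoin{}: these modifications never create edges (they insert a new vertex into existing edges, merge vertices, or split one vertex), so no sequence of them transforms $G_k$ into $H \cup G_k$. The starting hypergraph must instead be a doctored copy of $H$ itself, and supplying a provably minimal-width GHD for such a start is exactly the hard part that the padding gadget does not address. Second, and more fundamentally, the correctness lemma behind your adaptive schedule does not hold as sketched: the restriction of an optimal width-$k$ GHD of $H$ to the subtree of already-processed nodes is in general \emph{not} a GHD of the intermediate hypergraph, because the covers $\lambda_u$ may use edges (or vertices) that have not yet been added -- this is precisely the non-monotonicity you flagged, re-entering through the back door. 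So you have neither a guarantee that \emph{some} order keeps all intermediate widths $\leq k$, nor -- even granting that -- that the greedy ``accept the first approved modification'' rule cannot commit to a reachable state from which every single further modification pushes the width above $k$ although $\ghw(H) \leq k$; a stall is then inconclusive, and backtracking over orders is exponential. The paper sidesteps all of this by reducing from \textsc{3-Sat} with the GLPR construction, where explicit width-$2$ GHDs can be written down for every intermediate hypergraph: for \AddRel{} the added edges $e_{y_i}$ never occur in the covers of the canonical decomposition, and for \UnConst{} the chosen sequence is monotone in $\ghw$ because its inverse operations yield induced subhypergraphs. To repair your proof you would need, for each of these classes, a concrete family of intermediate hypergraphs together with explicitly constructible minimal-width GHDs along the whole sequence, which is essentially what the paper's bespoke construction provides.
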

\begin{proof}[Proof Idea]
	The basic strategy for each modification class $\Delta$ is simple. We show how to decide an \np-hard decision problem by finding an initial hypergraph $H_0$, which can be modified by some sequence of $\delta_1,\delta_2,\dots,\delta_\ell \in \Delta$ to some target $H$. The decision problem will be equivalent to the question whether $\ghw(H) \leq \ghw(H_0)$. However, this strategy presents us with two technical challenges. First, the initial $H_0$ needs to be chosen in such a way that a minimal width GHD can be constructed in polynomial time. Second, there can be no index $i$ such that after applying the first $i$ modifications to $H_0$, we get a $H_i$ with $\ghw(H_0) < \ghw(H_i)$ even when $\ghw(H) \leq \ghw(H_0)$. That is, the sequence can not increase the width at intermediate hypergraphs before decreasing again.
	
	To tackle these issues we do not reduce from \ghw checking where the second challenge is particularly problematic as our operations are not monotonic (wrt. \ghw) in general. Instead, we reduce from \textsc{3-Sat} by building on the proof of \np-hardness of \ghw checking (for constant width) given by~\citet{DBLP:journals/jacm/GottlobLPR21}. There a hypergraph is constructed that has \ghw 3 exactly if some \textsc{3-Sat} instance is satisfiable, and has \ghw 2 otherwise. Using this specific hypergraph we give concrete $H_0$ and modification sequences as described above for each $\Delta$. A full proof for each $\Delta$ is given in Appendix~\ref{sec:newproof}.
\end{proof}

Updating GHDs is computationally difficult for all of the
natural atomic operations that we considered, except for \Const{}\footnote{A \Const{} modification results in an induced subhypergraph, which is well known to never increase in width (see e.g.,~\cite{DBLP:journals/jacm/GottlobLPR21}).} (where the problem is trivial as \Const{} cannot increase width and a new GHD is trivial to construct). As
part of the proof of Theorem~\ref{thm:hard} we discuss how to decide
\textsc{3-Sat} via sequences of modifications, as long as those
sequences adhere to certain conditions. Using this observation we can strengthen the statement from Theorem~\ref{thm:hard} to all modification classes that capture any of the hard atomic cases in the following formal sense.

For a sequence of modifications $\delta_1, \delta_2,\dots, \delta_\ell$ let us write $\delta_1^n(H)$ as a shorthand for $\delta_n(\delta_{n-1}(\cdots(\delta_1(H))\cdots))$.
Let $\Delta$, $\Delta'$ be two sets of modifications. We say that $\Delta$ \emph{polynomially captures} $\Delta'$ if for every hypergraph $H$ and $\delta' \in \Delta'$ there exists a sequence $\delta_1,\delta_2,\dots, \delta_\ell$ of modifications in $\Delta$ such that $\ghw(\delta_1^i(H)) \leq \ghw(\delta_1^{i+1}(H))$ for $1 \leq i < \ell$, $\delta_1^\ell(H) = \delta'(H)$ and $\ell$ is polynomially bounded in the size of $H$. In plain terms, every modification in $\Delta'$ can equivalently be reached via a polynomial sequence of modifications from $\Delta$.

\begin{corollary}
	Let $\Delta$ be a class of modifications that polynomially captures at least one of \JJoin{}, \UnJoin{}, \UnConst{}, \AddRel{}, or \DelRel{}.
	Then  \minupdateProb{}($\Delta$) cannot be solved in polynomial time (assuming $\ptime \neq \np$).
\end{corollary}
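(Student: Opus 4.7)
The plan is to lift the \textsc{3-Sat} reduction underlying Theorem~\ref{thm:hard} to any class $\Delta$ satisfying the hypothesis of the corollary. Fix one hard atomic class $\Delta'$ captured by $\Delta$, say one of \JJoin{}, \UnJoin{}, \UnConst{}, \AddRel{}, or \DelRel{}. From the proof of Theorem~\ref{thm:hard} we obtain, in polynomial time from any \textsc{3-Sat} instance $\phi$, an initial hypergraph $H_0$ together with a minimal-width GHD of it and a polynomial-length sequence $\delta'_1, \ldots, \delta'_m$ in $\Delta'$ whose final hypergraph $H$ satisfies $\ghw(H) \leq \ghw(H_0)$ if and only if $\phi$ is satisfiable; moreover, in the satisfiable case, none of the intermediate hypergraphs exceeds the width $\ghw(H_0)$.

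First, I would apply polynomial capture to each $\delta'_j$, replacing it by a polynomial-length $\Delta$-sub-sequence producing the same hypergraph. Concatenating these sub-sequences yields a polynomial-length master sequence $\delta_1, \ldots, \delta_N$ in $\Delta$ that takes $H_0$ to $H$. The non-decreasing width condition built into polynomial capture ensures that, within each sub-sequence, the widths do not overshoot the width of the target hypergraph of the corresponding $\delta'_j$; hence in the satisfiable case every intermediate hypergraph along the master sequence still has $\ghw \leq \ghw(H_0)$.

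Second, assuming a hypothetical polynomial-time procedure $A$ for \minupdateProb{}($\Delta$), I would invoke $A$ iteratively along the master sequence, starting from the known minimal-width GHD of $H_0$. If at some step $A$ returns 'no', we declare $\phi$ unsatisfiable; if $A$ succeeds through the entire sequence, we declare $\phi$ satisfiable. Correctness follows: in the satisfiable case every intermediate hypergraph is decomposable within width $\ghw(H_0)$, so $A$ cannot fail; in the unsatisfiable case $\ghw(H) > \ghw(H_0)$ and the non-decreasing monotonicity within the relevant sub-sequence forces $A$ to output 'no' at some step.

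The main obstacle I anticipate is honoring the precondition of \minupdateProb{} that each successive input GHD be of minimal width. The resolution combines width-monotonicity from polynomial capture with the oracle specification: if the current hypergraph has $\ghw = k$ and $A$ outputs a GHD of the next hypergraph of width $k' \leq k$, then monotonicity forces the successor to have $\ghw \geq k$, pinning $k' = k$ and making the output minimal for the subsequent invocation. Together with the polynomial length of the master sequence, this yields a polynomial-time decision procedure for \textsc{3-Sat}, contradicting $\ptime \neq \np$ and establishing the corollary.
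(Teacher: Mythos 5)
Your proposal is correct and takes essentially the same route as the paper: the corollary is obtained exactly by lifting the Theorem~\ref{thm:hard} reduction, replacing each hard atomic modification by its capturing $\Delta$-sequence, and iterating the hypothetical polynomial-time update procedure along the concatenated polynomial-length sequence starting from the efficiently constructed minimal-width GHD of $H_0$, with the width-monotonicity condition in the definition of polynomial capture playing precisely the role you assign to it (keeping all intermediate widths pinned at $\ghw(H_0)$ in the satisfiable case). You even make explicit the maintenance of the minimal-width input invariant across successive oracle calls, a point the paper leaves implicit.
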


\section{Towards a Framework for Updates}
\label{sec:update-practice}

We have seen that \minupdateProb{} is difficult in general.
In the following we thus focus on making the first steps towards practical solutions for the problem. In this section we present the theoretical framework of \emph{\magic subtrees} for the uniform treatment of GHD updates under arbitrary modifications.
Moreover, we briefly discuss how our approach extends to sequences of elementary modifications.

\subsection{The $\delta$-\magic Subtrees of a Decomposition}

We lay the theoretical foundations of $\delta$-\magic subtrees,
a notion that will let us treat updates uniformly.
We first introduce some convenient notation.
Let \defGHD{} be a GHD of a CSP $P$ and let $T'$ be a subtree of $T$. We write $T \setminus T'$ for the forest created by removing the nodes of $T'$ from $T$.
Since we are
interested in the hypergraph structure,
we write $H[T']$ for the  subhypergraph of $H$ induced by the vertices $\bigcup_{u \in T'}B_u$.

We are now ready to introduce the central notion of our framework, $\delta$-\magic subtrees.
Intuitively, these subtrees (of a decomposition) represent a kind of \emph{local neighborhood} of the modification $\delta$, i.e., the segment of the decomposition that corresponds to those parts of the hypergraph that are changed by $\delta$. Note that the definitions and results in this section apply not only to the previously discussed elementary modifications but to arbitrary modifications in the sense of Definition~\ref{def:mod}.

\begin{definition}[$\delta$-\magic subtree]
	\label{def:magic}
	Let $\mathcal{G}$ be a GHD of hypergraph $H$ with tree $T$, and let $\delta$ be a
	modification.
	A subtree $T^*$ of $T$ is a \emph{$\delta$-\magic{} subtree}
	if the following conditions hold:
	\begin{itemize}
		\item \(H[T\setminus T^*] = \delta(H)[T\setminus T^*]\), 
		\item and no $v \in V(\delta(H))\setminus V(H)$ is adjacent (in $\delta(H)$) to a vertex in $B(T\setminus T^*)$.
	\end{itemize}
\end{definition}

Thus, we split our existing decomposition in two
parts: the \magic subtree $T^*$, where the corresponding part of the
hypergraph has changed, and the \emph{outer} subtrees which
correspond to those subhypergraphs that remain unchanged by the
modification. An important reason for considering \magic subtrees is
captured by the following Lemma~\ref{lem:keep}, namely that all the trees outside
of $T^*$ are still correct GHDs for their respective
parts of the new hypergraph. Hence, it is possible
to reuse these partial decompositions for $\delta(H)$ and save the
effort of decomposing those parts of the hypergraph again.

\begin{lemma}
	\label{lem:keep}
	Let \defGHD{} be a GHD of hypergraph $H$ with tree $T$, let $\delta$ be a modification, and let $T^*$ be a $\delta$-\magic subtree.
	For every tree $T'$ in the forest $T \setminus T^*$ it holds that \defGHDprime{} is a GHD of $\delta(H)[T']$.\footnote{Technically every edge $e$ in every edge cover $\lambda_u$ is replaced by the edge $e \cap (\bigcup_{u \in T'}B_u)$ of the induced subhypergraph.}
\end{lemma}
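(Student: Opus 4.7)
The plan is to verify the three GHD conditions for the restricted decomposition on $T'$ (where each cover $\lambda_u$ is replaced by $\lambda'_u = \{e \cap V' \mid e \in \lambda_u\}$ with $V' = \bigcup_{u \in T'} B_u$). The $\delta$-\magic{} property lets me first reduce everything to the original hypergraph: setting $U = B(T \setminus T^*)$, the first bullet of Definition~\ref{def:magic} gives $H[U] = \delta(H)[U]$, and since $V' \subseteq U$ (because $T'$ is a component of $T \setminus T^*$), restricting both induced subhypergraphs further to $V'$ yields $H[T'] = H[V'] = \delta(H)[V'] = \delta(H)[T']$. Thus it suffices to check that the restricted decomposition is a GHD of $H[T']$.

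Conditions (2) and (3) are then inherited from $\mathcal{G}$. For (3), for any $u \in T'$ we have $B_u \subseteq V'$ by definition and $B_u \subseteq B(\lambda_u)$ by the GHD property of $\mathcal{G}$, so $B_u \subseteq V' \cap B(\lambda_u) = B(\lambda'_u)$. For (2), given $v \in V'$, the set $S_v = \{u \in T \mid v \in B_u\}$ is a connected subtree of $T$ by the connectedness condition on $\mathcal{G}$; since $T'$ is also a connected subtree of $T$ and the intersection of two subtrees of a tree is again a subtree, the restriction $S_v \cap T' = \{u \in T' \mid v \in B_u\}$ is connected in $T'$.

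The main obstacle is condition (1), because an edge of $E(H[T'])$ may originate from an edge $e \in E(H)$ whose witnessing bag in $\mathcal{G}$ lies outside $T'$. Pick $f \in E(H[T'])$ and write $f = e \cap V'$ with $e \in E(H)$, $f \neq \emptyset$. Let $u \in T$ satisfy $e \subseteq B_u$; if $u \in T'$ then $f \subseteq B_u$ and we are done. Otherwise, I would introduce $u'_0$, the unique node of $T'$ adjacent in $T$ to a node of $T^*$. Uniqueness holds because $T^*$ and $T'$ are both connected subgraphs of the tree $T$, so exactly one $T$-edge can join them (two such edges would create a cycle). For each $v \in f$, the vertex $v$ lies in $B_u$ (outside $T'$) and also in some bag $B_{u'_1}$ with $u'_1 \in T'$ (since $v \in V' = B(T')$). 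The unique $T$-path from $u$ to $u'_1$ must pass through $u'_0$ (the only entry point into $T'$), and by the connectedness condition for $\mathcal{G}$ the vertex $v$ belongs to every bag on this path; in particular $v \in B_{u'_0}$. Hence $f \subseteq B_{u'_0}$, completing the verification of condition (1) and thereby the proof.
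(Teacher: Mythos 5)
Your proposal is correct and follows essentially the same route as the paper's own proof: first use the first condition of Definition~\ref{def:magic} together with $B(T') \subseteq B(T\setminus T^*)$ to reduce the claim to showing the restricted decomposition is a GHD of $H[T']$, then verify the three GHD conditions, with connectedness and the cover condition inherited directly and the edge-coverage condition being the only nontrivial step. The single difference is presentational: where the paper argues tersely that the vertex subtrees of an edge all touch at the witnessing node and hence share a common node inside $T'$, you make that common node explicit as the unique attachment node $u'_0$ of $T'$, which is a fine (and slightly more detailed) way to justify the same step.
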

\begin{proof}
	Since we assume that $T'$ is a tree in the forest $T \setminus T^*$ we also have that $B(T') \subseteq B(T \setminus T^*)$. Hence, it must also hold that $H[T'] = \delta(H)[T']$ by assumption that $T^*$ is $\delta$-\magic.
	
	We now argue that \defGHDprime{} is a GHD of $H[T']$ and thus, by the previous argument, also of $\delta(H)[T']$. First, observe that the connectedness condition is clearly still satisfied in $T'$ since we never change the bags. For the covers it is clear that if $e \in E(H)$, then $e \cap B(T')$ is an edge in $H[T']$. Since $B_u \subseteq B(T')$ we clearly also have that $B_u \subseteq \bigcup_{e \in \lambda_u} e \cap B(T')$.
	What is left, is to verify that every edge $e$ of $H[T']$ is covered in $T'$. Let $e'$ be one of the edges in $H$ such that $e = e' \cap B(T')$. Since we start from a GHD of $H$, there must be a node $u$ where $e'$ is covered. Hence, all the vertices of $e$ are in $B_u$. Hence, all of the subtrees induced by the vertices in $e$ touch by the connectedness condition. Since all of the vertices of $e$ are in $B(T')$ all those subtrees must have a common node in $T'$. Hence, \defGHDprime{} is a GHD of $H[T']$ and therefore also of $\delta(H)[T']$.
\end{proof}

\begin{example}
	\label{ex:magic}
	In Example~\ref{ex:more-mods}, a $\delta \in \AddRel$ is used to create the hypergraph $H_{P_2}$ from $H_{P'}$, as in Figure~\ref{fig:mod-add-edge}. We now consider reverting this modification, i.e., the modification $\delta^{-1} \in \DelRel$ that removes the edge $\{c,i\}$, i.e., we have $\delta^{-1}(H_{P_2}) = H_{P'}$ (recall, the hypergraph of $P'$ is shown in Figure~\ref{fig:hg-cy}).
	As input for our update example, we use the width 2 GHD \defGHD{} of $H_{P_2}$ given in Figure~\ref{fig:ex.magic.decomp}.
	The two highlighted nodes in Figure~\ref{fig:ex.magic.decomp} represent a $\delta^{-1}$-\magic subtree $T^*$ of $T$.
	Observe that $T \setminus T^*$ consists of two trees that correspond to the induced subhypergraphs in Figure~\ref{fig:ex.magic.subhgs}.
	By Lemma~\ref{lem:keep}, these parts remain correct GHDs for their respective induced subhypergraphs.
	
	We could update the overall decomposition by changing the bag $\{c,e,i,k\}$ to $\{e,i,k\}$ while removing $w_7$ from the $\lambda$ label to update the decomposition to fit $P'$. Mechanically this can be checked by searching for a GHD of $\delta^{-1}(H_{P_2})[\{a,b,c,e,i,h,k\}]$ that is consistent with the surrounding trees in a certain way that will be discussed below.
\end{example}

\begin{figure}[t]
	\centering
	\subfigure[The bags of a GHD of $H'$ with width 2.
	The minimal $\delta$-\magic subtree is highlighted.]{
		\tikzstyle{htmatrix}=[%
matrix,
matrix of nodes,
nodes={draw=none},
column 1/.style={nodes={fill=gray!20,align=right},anchor=base east,minimum width=3em},
column 2/.style={anchor=base west},
ampersand replacement=\&
]

\begin{tikzpicture}[sibling distance=12em,
  every node/.style = {shape=rectangle,draw,inner sep=1,align=center}
  ]
  \node (root) [htmatrix,very thick,color=red] {\htnodenum{$a,b,c,e,h$}{$w_1,w_5$}{1}}
  child {
    node (c1) [htmatrix,very thick,color=red] {\htnodenum{$c,e,i,k$}{$w_3,w_7$}{2}} 
    child {
      node (c2) [htmatrix]{\htnodenum{$j,k,l$}{$w_6$}{4}}
    }
  }
  child {
    node (c3) [htmatrix] {\htnodenum{$a,d,f,g,h$}{$w_2,w_4$}{3}}
  };
\end{tikzpicture}
		\label{fig:ex.magic.decomp}
	}\stdhspace%
	\subfigure[Hgs induced by $T \setminus T^*$.]{
		\begin{tikzpicture}[scale=1]
	\node	(a)	at	(0,1.5)	{$a$};
	\node	(d)	at	(0,.75)	{$d$};

	\node	(f)	at	(0,0)	{$f$};
	\node	(g)	at	(1,0)	{$g$};
	\node	(h)	at	(2,0)	{$h$};

	\node	(j)	at	(4,1.5)	{$j$};
	\node	(k)	at	(4,.75)	{$k$};
	\node	(l) at	(4,0)	{$l$};
	\draw (0,1.5) ellipse [x radius=0.22, y radius=0.22];
        \draw (2,0) ellipse [x radius=0.22, y radius=0.22];
	\draw (1,0) ellipse [x radius=1.4, y radius=0.4];
        \draw (4,0.75) ellipse [x radius=0.22, y radius=0.22];
	\draw[rotate around={90:(0,.75)}] (0,.75) ellipse [x radius=1.1, y radius=0.4];

	\draw[rotate around={90:(4,.75)}] (4,.75) ellipse [x radius=1.1, y radius=0.4];
\end{tikzpicture}
		\label{fig:ex.magic.subhgs}
	}
	\caption{Example~\ref{ex:magic}.}
	\label{fig:exsubt}
\end{figure}

Since we want to reuse as much of the old decomposition as possible, it naturally becomes interesting to have $T \setminus T^*$ as large as possible. Hence, we are interested in finding \emph{minimal} $\delta$-\magic subtrees, i.e., those $\delta$-\magic subtrees with the least number of nodes. Fortunately, it is relatively easy to find minimal \magic subtrees. The full tree $T$ is trivially a $\delta$-\magic subtree. We can then start from $T_0 = T$ and greedily eliminate leaves as long as the property from Definition~\ref{def:magic} remains valid.
Once no more leaves can be removed, the procedure will have reached a minimal $\delta$-\magic subtree.\footnote{A more detailed argument is available in the technical appendix.}

\begin{lemma}
	\label{lem:alg}
	For any GHD $\mathcal{G}$ of a hypergraph $H$ and any modification $\delta$ there exists a \emph{unique} minimal $\delta$-\magic subtree.
	Moreover, there exists an algorithm with input $(\mathcal{G}, H, \delta(H))$ that computes the minimal $\delta$-\magic subtree in polynomial time.
\end{lemma}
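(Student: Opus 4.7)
The plan is to prove both parts together by analyzing the greedy leaf-elimination procedure already sketched just before the statement. The pivotal observation is a monotonicity property: the $\delta$-\magic{} property is closed under enlargement of $T^*$. If $T^* \subseteq T^{**}$ are both connected subtrees of $T$, then $B(T \setminus T^{**}) \subseteq B(T \setminus T^*)$, and both conditions of Definition~\ref{def:magic} — equality of induced subhypergraphs on the outer vertex set and non-adjacency of any new vertex to it — are preserved when passing to subsets of that outer set. In particular $T^* = T$ is trivially $\delta$-\magic, so the family of $\delta$-\magic{} subtrees is always non-empty.

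For the algorithmic part I would initialize $T^0 := T$ and, as long as some leaf $u$ of the current $T^i$ admits $T^i \setminus \{u\}$ still being $\delta$-\magic, set $T^{i+1} := T^i \setminus \{u\}$. A single check reduces to comparing the edges of $H[B(T \setminus (T^i \setminus \{u\}))]$ with those of $\delta(H)[B(T \setminus (T^i \setminus \{u\}))]$ and scanning $V(\delta(H)) \setminus V(H)$ against the outer bag union, both polynomial in the sizes of $T$, $H$, and $\delta(H)$. Each successful iteration strictly shrinks $|T^i|$, so the procedure halts in at most $|T|$ steps, giving the polynomial-time bound.

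For uniqueness, my plan is to identify the minimum $\delta$-\magic{} subtree with a Steiner tree inside $T$. Let $D := \{u \in T : u \text{ appears in every $\delta$-\magic{} subtree}\}$. I will show that (i) every $\delta$-\magic{} subtree contains $D$ (immediate) and (ii) the minimum connected subtree $T_D$ of $T$ containing $D$ is itself $\delta$-\magic. Step (ii) is the content-bearing one: its claim is that once every ``dangerous'' node has been corralled into $T_D$, each vertex remaining in $B(T \setminus T_D)$ has matching incident-edge sets in $H$ and $\delta(H)$ and is not adjacent to any new vertex, so the two induced subhypergraphs coincide. Uniqueness then follows from the elementary fact that, inside a tree, the minimum connected subgraph containing a prescribed vertex set is unique (it is the union of all pairwise $D$-paths). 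The greedy procedure above must therefore converge to $T_D$ regardless of the order in which leaves are tested.

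The main obstacle will be making step (ii) genuinely rigorous. The naive attempt — letting $D$ contain every node whose bag holds a vertex with different edge incidences in $H$ versus $\delta(H)$ — is clearly sound for necessity but tends to overshoot, because the restriction of a modified edge to $B(T \setminus T^*)$ may coincide with the restriction of an unmodified edge and thus induce no observable change in the induced subhypergraph. Pinning down exactly which nodes are truly forced, either through a finer vertex-level definition of $D$ or by tying $D$ directly to the invariants maintained by the greedy elimination (so that local minimality implies global minimality by an exchange argument across $T_1^* \triangle T_2^*$ for two candidate minima), is the crux that I would develop in full in the technical appendix.
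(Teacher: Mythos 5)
Your monotonicity observation (the $\delta$-mutable property is preserved when the subtree is enlarged), the greedy leaf-elimination algorithm, and the polynomial running-time bound all match the paper's proof and are fine. The genuine gap is exactly where you yourself locate it: uniqueness of the minimal $\delta$-mutable subtree --- and with it the claim that the greedy's fixed point is the \emph{global}, not merely local, minimum --- is never established. Your Steiner-tree plan reduces uniqueness to step (ii), namely that the minimal connected subtree $T_D$ spanning the ``forced'' nodes $D$ is itself $\delta$-mutable, and you concede you do not know how to define $D$ so that this holds. But without (ii) you also cannot argue that a stalled greedy tree contains the minimal mutable subtree, so the assertion that ``the greedy procedure must converge to $T_D$'' is unsupported. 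As it stands, the proposal proves existence of mutable subtrees and the running time, but neither the first half of the lemma nor the correctness of the algorithm's output.

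The paper closes this hole with an exchange argument rather than a characterization of forced nodes. Suppose $T_1$ and $T_2$ are distinct minimal $\delta$-mutable subtrees. Then some tree $T'$ of the forest $T \setminus T_1$ meets $T_2$; let $X = T' \cap T_2$. Since $H$ and $\delta(H)$ induce the same subhypergraph on $B(T \setminus T_2)$ (because $T_2$ is mutable) and on $B(X) \subseteq B(T \setminus T_1)$ (because $T_1$ is mutable), and since no new vertex of $\delta(H)$ is adjacent to either vertex set, $T_2 \setminus X$ is again a $\delta$-mutable subtree and strictly smaller than $T_2$ --- a contradiction. The same construction shows that no mutable subtree can overlap the stalled greedy tree only partially, while your monotonicity argument excludes a proper subtree of it (some leaf would then still be removable, since every node of a tree lies on a path between leaves); together these yield both uniqueness and global minimality of the returned tree. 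To repair your write-up you must supply this content: either prove your step (ii) for a correctly chosen $D$, or carry out in full the exchange argument across two candidate minima that you only gesture at in your final sentence.
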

\begin{proof}
	We first prove the uniqueness of minimal $\delta$-\magic subtrees.
	Suppose towards a contradiction that there are two distinct minimal $\delta$-\magic subtrees $T_1$ and $T_2$ of a GHD $\defGHD$.
	Recall, in the argument for Lemma~\ref{lem:keep} it was already argued that for every tree $T'$ in  $T \setminus T_1$  or $T \setminus T_2$, we have that $H[T'] = \delta(H)[T']$.
	
	Now, from the assumption that both $T_1$ and $T_2$ are minimal but distinct there has to exist a tree $T' \in T\setminus T_1$ such that $T' \cap T_2 \neq \emptyset$. If this were not true, then $T_2$ would be a subtree of $T_1$ and we are done. Fix such a $T'$ and let $X$ be the set of nodes that are in $T'$ and $T_2$.
	Since $H[T'] = \delta(H)[T']$, and $X \subseteq T'$, also $H[X] = \delta(H)[X]$ and it becomes easy to see that
	\[
	H[T \setminus (T_2 \setminus X)] = H[B(X) \cup B(T\setminus T_2)] = \delta(H)[B(X) \cup B(T\setminus T_2)] =     \delta(H)[T \setminus (T_2 \setminus X)] 
	\]
	Thus, $T_2 \setminus X$ is also a $\delta$-\magic subtree and smaller than $T_2$, contradicting our initial assumption of minimality. Note that the second condition of Definition~\ref{def:magic} cannot become unsatisfied by removing nodes from $T_2$.

	The algorithm from the statement is given in Algorithm~\ref{alg:minmagic}. The algorithm clearly starts with a $\delta$-\magic subtree and throughout the iterative elimination the \emph{working tree} $T'$ remains a $\delta$-\magic subtree. Note that the argument from before can be seen as a method to create a smaller $\delta$-\magic subtree from any disjoint pair of $\delta$-\magic subtrees.
	Hence, if no more leaves can be removed from $T'$ in the algorithm, then every smaller tree must not be disjoint. But if the minimal subtree were to be a proper subtree of $T'$, then removal of some leaf must be possible since the induced subhypergraphs of $T \setminus T'$ grow monotonically. Thus, there can be no smaller $\delta$-\magic subtree that is a subtree of $T'$ and none that has disjoint vertices from $T'$. It follows that the returned $T'$ is minimal and the algorithm is therefore correct.
\end{proof}

\begin{algorithm}[tb]
	\small
	\DontPrintSemicolon
	\SetKwInOut{KwPara}{Parameter}
	\SetKw{break}{break}\SetKw{return}{Return}
	
	\KwIn{$H$, $\delta(H)$, GHD $\defGHD{}$ of $H$}
	\KwOut{A minimal $\delta$-\magic subtree.}
	
	\SetKwProg{myalg}{Function}{}{}
	\Begin{
		$T' \coloneqq T$\;
		$V_{\mathit{new}} = V(\delta(H)) \setminus V(H)$\;
		\Repeat{$T'$ did not change}{
			\ForEach{Leaf $u$ of $T'$}{
				$T'_{-u} = T' \setminus \{u\}$\;
				$A \coloneqq H[T \setminus T'_{-u}]$\;
				$B \coloneqq \delta(H)[T \setminus T'_{-u}]$\;
				$\mathit{Adj} \coloneqq $ all vertices adjacent to $B(T \setminus T'_{-u})$ in $\delta(H)$\;
				\If{$A = B$ and $V_{\mathit{new}} \cap \mathit{Adj} = \emptyset$}{
					$T' \coloneqq  T'_{-u} $ \;
					\break \;
				}
			}
		}
		\return $T'$ \;
	}
	\caption{Finding Minimal $\delta$-\magic Subtrees.}
	\label{alg:minmagic}
\end{algorithm}

By Lemma~\ref{lem:keep} we can use the old decomposition to derive correct GHDs for certain induced subgraphs of $\delta(H)$. It is not guaranteed that the minimal width GHD of $\delta(H)$ can be constructed in such a way that these pre-solved induced subgraphs correspond to parts of the decomposition. However, the possibility of only having to recompute a decomposition for some small subgraph $\delta(H)^*$ is promising in practice.
In particular, we are interested in $\delta(H)^*$ which is the part of $\delta(H)$ that contains $\delta(H)[T^*]$ for the minimal $\delta$-\magic subtree $T^*$, plus any possible new vertices and edges introduced by $\delta$.
Ideally, we want a new GHD for $\delta(H)^*$ with which we can replace $T^* \subseteq T$ to arrive at a valid generalized hypertree for $\delta(H)$.
This way we can fully reuse the $T\setminus T^*$ parts of the old GHD.
To replace the new decomposition of $\delta(H)^*$ in
place of $T^*$ in $T$ we need to enforce some additional constraints on
the GHD of $H^*$. Therefore, we introduce the notion of
\emph{bag constraint} as a set $C \subseteq V(\delta(H)^*)$. A bag constraint $C$ is satisfied
by a GHD \defGHD{} if there exists a node $u \in T$, s.t., $C \subseteq B_u$. In particular, given such a GHD and a \magic subtree $T^*$, let $\{u_1,\dots, u_q\}$ be the set of nodes in $T$ that have a neighbor in $T^*$.
We call the set $\{C_i \mid C_i = B_{u_i} \cap (\bigcup_{u \in T^*}B_u), 1 \leq i \leq q\}$ the $T^*$\emph{-induced bag constraints}.

\begin{theorem}
	\label{thm:subinst}
	Let $\mathcal{G}$ be a width $k$ GHD of a hypergraph $H$ with tree $T$, let $\delta$ be a modification and let $T^*$ be a $\delta$-\magic subtree of $T$.
	If $\delta(H)^*$ has a GHD of width $\leq k$ that satisfies all $T^*$-induced bag constraints, then $\ghw(\delta(H)) \leq k$.
\end{theorem}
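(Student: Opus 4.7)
The approach is to assemble a width-$\leq k$ GHD $\mathcal{G}'$ of $\delta(H)$ by gluing the promised decomposition $\mathcal{G}^*$ of $\delta(H)^*$ into the hole left after removing $T^*$ from $\mathcal{G}$. Let $u_1,\dots,u_q$ be the nodes of $T\setminus T^*$ that were adjacent to $T^*$ in $T$, and let $C_i = B_{u_i} \cap \bigcup_{u\in T^*}B_u$ be the $T^*$-induced bag constraints. By hypothesis, for each $i$ the decomposition $\mathcal{G}^*$ contains a node $w_i$ with $C_i \subseteq B_{w_i}$. I form $\mathcal{G}'$ by taking the disjoint union of the tree of $\mathcal{G}^*$ and the forest $T\setminus T^*$ and then adding one new tree-edge $\{u_i,w_i\}$ for each $i$. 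Since $T^*$ is a connected subtree, every component of $T\setminus T^*$ contains exactly one boundary $u_i$, so these new edges create no cycles and $\mathcal{G}'$ is indeed a tree. Bags and edge covers outside $T^*$ are inherited from $\mathcal{G}$ (with covers restricted to the induced subhypergraph as in Lemma~\ref{lem:keep}); inside $\mathcal{G}^*$ they are left unchanged.

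I then verify the three GHD conditions and the width bound. The width of $\mathcal{G}'$ is immediate since both $\mathcal{G}$ and $\mathcal{G}^*$ have width $\leq k$, and condition~(3) is inherited directly from each source without modification. For condition~(1), Lemma~\ref{lem:keep} combined with the first clause of Definition~\ref{def:magic} ensures that every edge of $\delta(H)[T\setminus T^*]$ is covered somewhere in the forest part; every remaining edge of $\delta(H)$, in particular every edge incident on a new vertex, which by the second clause of Definition~\ref{def:magic} cannot reach $B(T\setminus T^*)$, lies in $\delta(H)^*$ and is therefore covered by $\mathcal{G}^*$.

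The main obstacle is condition~(2), the connectedness condition, and this is precisely what the $T^*$-induced bag constraints are engineered to enforce. For a new vertex, or an old vertex that never appears in $B(T\setminus T^*)$, all occurrences in $\mathcal{G}'$ lie inside $\mathcal{G}^*$, so connectedness is inherited from $\mathcal{G}^*$. For an old vertex $v$ that appears in several trees $T_1',\dots,T_r'$ of $T\setminus T^*$, connectedness of $v$'s subtree in the original $T$ forces $v$ to belong to at least one bag inside $T^*$ and to $B_{u_{i_j}}$ for each boundary node $u_{i_j}$ of $T_j'$; this gives $v \in C_{i_j}$ and hence $v \in B_{w_{i_j}}$ by the bag constraints, so the new edges $\{u_{i_j},w_{i_j}\}$ bridge the outer components through $\mathcal{G}^*$. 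The analogous argument handles the case where $v$'s occurrences in a single outer tree must be joined to further occurrences inside $\mathcal{G}^*$: the same boundary node $u_i$ carries $v$, $v \in C_i$, and the bridging edge $\{u_i,w_i\}$ produces the desired connection. Taking the union of the (already connected) restrictions in each $T_j'$, the (connected) subtree of $v$ in $\mathcal{G}^*$, and these bridging edges yields a single connected subtree of $\mathcal{G}'$ for $v$, which completes the verification and establishes $\ghw(\delta(H)) \leq k$.
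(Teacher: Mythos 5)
Your proposal is correct and follows essentially the same route as the paper's own proof: you glue the forest $T\setminus T^*$ onto the assumed GHD of $\delta(H)^*$ at nodes witnessing the $T^*$-induced bag constraints, and verify width, edge coverage (via Lemma~\ref{lem:keep} and the second clause of Definition~\ref{def:magic}), and connectedness exactly as the paper does. Your treatment of the connectedness condition is somewhat more explicit than the paper's (tracing a vertex's occurrences through the boundary nodes and bridging edges), but it is the same construction and the same argument.
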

\newcommand{\starghd}{\ensuremath{\left< T', (B'_u)_{u\in T'}, (\lambda'_u)_{u\in T'} 
		\right>}}
\begin{proof}
	We prove the statement by constructing the required new width $k$
	GHD of $\delta(H)$ from the GHD of $\delta(H)^*$ and the subtrees $T \setminus T^*$. Hence, not only is the width of $\delta(H)$ at most $k$, but a GHD of $\delta(H)$ can be efficiently constructed by only computing a GHD (with bag constraints) for $\delta(H)^*$.
	
	Suppose $\calD^*$ is a width $k$ GHD of $H^*$ that satisfies all $T^*$-induced bag constraints. Let $C_1, \dots, C_\ell$ be $T^*$-induced bag constraints and recall that every bag constraint is associated one-to-one to a node in $T$ that neighbors a node in $T^*$. Let $u_i$ be the node associated to the constraint $C_i$ in this way for all $i \in [\ell]$.
	
	The final decomposition $\starghd$ is now constructed as follows starting from $\calD^*$. For each bag constraint $C_i$, identify the subtree $T_i \in T \setminus T^*$ that contains $u_i$ as well as any node $u^*_i$ in $\calD^*$ that satisfies $C_i$. Then, attach the tree $T_i$ at node $u_i$ to $\calD^*$ at $u^*_i$. By attaching subtrees for each bag constraint this way we obtain our final $\starghd$.
	
	We now argue that $\starghd$ is indeed a width $k$ GHD of
	$\delta(H)$. Indeed, width $k$ follows immediately from the
	construction since $\calD^*$ and $\defGHD{}$ both have width $k$ and
	none of their $\lambda$-labels are modified.  For connectedness,
	recall that by our definition of bag constraints the tree $T_i$ is
	attached to a node $u^*_i$ whose bag contains $B_{u_i} \cap B(T^*)$.
	Hence, every vertex in $B(T^*)$, and thus also every vertex in bags
	of $\calD^*$, that also occurs in $B(T_i)$ must be in $B_{u_i}$. We
	see that connectedness can not be violated by the attaching step of
	our construction. By Lemma~\ref{lem:keep}, all the individual parts that are attached
	to $\calD^*$ already satisfy the connectedness condition and it therefore holds also for
	all of $\starghd$.
	
	Finally, we verify that all edges of $\delta(H)$ are covered by
	some bag of $\starghd$.  We partition the set of edges in two sets,
	edges that are in $\delta(H)^*$ and those that are not. If an edge is in $\delta(H)^*$, then it must
	be covered by $\calD^*$ and thus also in $\starghd$. In the latter case, observe that if an edge $e$ is in $\delta(H)$ but not in $\delta(H^*)$, then $e$ is in $H$ and thus covered by some node of $T \setminus T^*$. Note that there is a $T^*$-induced bag constraint for every tree in $T \setminus T^*$. Hence, by Lemma~\ref{lem:keep} and the above construction reattaching the subtree in which $e$ is covered, $e$ is also covered in $\starghd$. 
	
	Note that we made no explicit use of the second condition in Definition~\ref{def:magic}. The condition effectively enforces that any edges that contain new vertices will be in $H^*$ and in this way implicitly factors into the above argument.
\end{proof}

It is possible for no $T^*$-induced bag constraints satisfying GHD of $\delta(H)^*$ with width at most $k$ to exist, even if $\ghw(\delta(H)) \leq k$. Thus, while the discussions of this section -- and in particular the ideas of Theorem~\ref{thm:subinst} -- form the foundation of our practical implementation, some adaptions are necessary to efficiently deal with those cases. This will be the topic of the following section.

\section{Algorithmic Implementation Of $\delta$-mutable Subtree Framework }
\label{sec:cem}

We now want to focus on how to use the concept of $\delta$-mutable subtrees from Section~\ref{sec:update-practice} to define an algorithmic framework that can take an existing GHD and the hypergraph of an updated CSP and provide some data structure that can be used to speed up the computation of a new GHD, even if parts of the old GHD need to be recomputed. We want this framework to be implementation agnostic, so that it can be plugged in to essentially any existing decomposition algorithm with few required adaptations. 

To this end, we first define this framework, and afterwards we also give a practical example of how it has been adapted to an existing state of the art decomposition algorithm. We then report on its performance in various update tasks in Section~\ref{sec:experiments}.

\subsection{Introducing the Framework}
\label{sub:introducing_the_framework}

The goal of our implementation is twofold: we want a strategy built on top of the framework of $\delta$-mutable subtrees,
and we want it to encompass existing algorithms for computing GHDs.
In the following, we make use of the basics of top-down GHD construction, as explained in Section~\ref{sect:ComputingHDs}.
For more detail, we refer the reader to a recent overview on hypergraph decompositions~\cite{DBLP:conf/cpaior/GottlobLLOP20}. Before we proceed with explaining our implementation, we will need some way of referring how the bags and edge covers of the old GHDs are affected by an update $\delta$ when we want to use the old GHD with the modified hypergraph $\delta(H)$. We first introduce a function $s_\delta: E(H) \rightarrow E(\delta(H)) \cup \emptyset $, which will map edges $e \in E(H)$ to their corresponding equivalent $e' \in E(\delta(H))$, if it exists, or $\emptyset$, if $\delta$ actually deleted that edge. By slight abuse of notation, given subset $X \subseteq E(H)$, we shall use $\delta(X) = \{ s_\delta(e) \mid e \in X\}$. In this same vein, we also introduce for a vertex set $Y \subseteq V(H)$, the notation $\delta(Y) = Y \cap V(\delta(H)) $. 
Another notational choice we make throughout this section is how to refer to the inputs of algorithms that deal with updated hypergraphs. Since all algorithms we present only deal with a single updated hypergraph and its subgraphs and never need the original hypergraph, we omit the use of the $\delta$ function. So instead of $\delta(H)$, we just write $H$. We will still need a $\delta$ mutable subtree, and we assume that it has been computed and is provided to the algorithms as an input. 

The idea behind our framework is to try to update the minimal $\delta$-mutable subtree $T^*$ and reuse many of the \emph{outer} subtrees. 
If this is not possible, due to the way the modification has changed the hypergraph, we still want to return a GHD of the updated CSP quickly.
Bag constraints from Theorem~\ref{thm:subinst} encode the properties necessary for parts of $T^*$ to be reused. As was mentioned, however, it is possible that in order to successfully find a new GHD of low width, we need to forgo some of them.
For our implementation we think of them as \emph{soft constraints}:  we make an effort to find GHDs that reuse $T^*$  if they exist,  and if they do not, use them as a starting point in the search space.

We realize this behavior via the concept of a \emph{scene}. 

\begin{definition}[Scene]
	Let \defGHD{} be a normal-form GHD of a hypergraph $H$. A scene mapping $\sigma \colon  2^{E(H)} \rightarrow N(T) $ is a partial mapping from a subhypergraph $H' \subseteq H$ to a node $u \in T$. The co-domain element of $\sigma$ is denoted as a \emph{scene}.
	Given a modification $\delta$ and a $\delta$-mutable subtree $T^*$,  we call $\sigma(H)$ \emph{out-scene} if $\sigma(H) \not \in T^*$ or \emph{in-scene} if $\sigma(H) \in T^*$.
\end{definition}

\newcommand{\delt}[1]{{ $\delta$(#1) }}

\begin{algorithm}[tb]
	\DontPrintSemicolon
	\SetKwFunction{algo}{DecompUpdate} 
	\SetKwFunction{getComp}{ComputeComponents}
	\SetKwFunction{checkScenery}{SceneCreation}
	\SetKwFunction{windUp}{SceneCreationUp}
	\SetKwFunction{windDown}{SceneCreationDown}
	\SetKwFunction{buildGHD}{BuildGHD}
	\SetKwFunction{decomp}{$D$}
	\SetKwProg{myalg}{function}{}{}
	
	\myalg{ \windDown{$n$: Node, $H'$: Subgraph,  $T^*$: $\delta$-mutable subtree}  } {
		$ \sigma \coloneqq \text{Create empty mapping}$ \;
		\If { \delt{$n$.Bag} $\not \subseteq   B(\delta(n\text{.Cover}))$ } {
			\textbf{return} \windUp{$n$, $H'$, $T^*$} \Comment{start upwards phase for $T_n$} \\
			\label{upCall1}
		}
		$sep \coloneqq $ \delt{$n$.Bag}  \;
		comps $\coloneqq $ $[sep]$-components of $H'$  \;
		\If { $|comps| \not = |n.Children|$ } {
			\textbf{return} \windUp{$n$, $H'$, $T^*$} \Comment{start upwards phase for $T_n$} \\
			
			\label{upCall2}
		}
		\uIf (\Comment{$n$ is on the path from root to $T^*$ or contained in $T^*$ itself}) { $\exists u \in T_{n} $ s.t. $u \in N(T^*)$ } {
			\texttt{scene} $\coloneqq$ new in-scene, mapping $H'$ to $sep$ \;
		} 
		\Else {
			\texttt{scene} $\coloneqq$ new out-scene, mapping $H'$ to $sep$
		}
		Add the mapping in \texttt{scene} to $\sigma$ \;   
		\label{line:addMap}
		\For {$u \in n$.Children} {
			\label{line:startLoop}
			\For {$c \in$ comps} {
				\If(\Comment{the component $c$ ``covered'' in $T_u$}) { $ ( V(c) \setminus \delta(n\text{.Bag}) \cap \delta(u\text{.Bag})) \not = \emptyset$  }{
					$\sigma' \coloneqq $ \windDown{$u$,$c$,$T^*$} \;     
					Add the mapping in $\sigma'$ to $\sigma$ \;        
					\textbf{break} \Comment{start next iteration of outer loop} 
				}
			}
			\textbf{return} \windUp{$n$, $H'$, $T^*$} \Comment{discard previous mappings and start upwards phase for $T_n$} \\
			
			\label{upCall3}
		}
		\label{line:endLoop}
		\textbf{return} $\sigma$ \;
	}

	\caption{SceneCreation -- Downward Phase}
	\label{alg:updateSCDown}
\end{algorithm}

\begin{algorithm}[tb]
	\DontPrintSemicolon
	\SetKwFunction{algo}{DecompUpdate} 
	\SetKwFunction{getComp}{ComputeComponents}
	\SetKwFunction{checkScenery}{SceneCreation}
	\SetKwFunction{windUp}{SceneCreationUp}
	\SetKwFunction{windDown}{SceneCreationDown}
	\SetKwFunction{buildGHD}{BuildGHD}
	\SetKwFunction{decomp}{$D$}
	\SetKwProg{myalg}{function}{}{}

	\myalg{ \windUp{$n$: Node, $H'$: Subgraph, $T^*$: $\delta$-mutable subtree }  } {
		
		$ \sigma \coloneqq \text{Create empty mapping}$ \;
		
		coveredBelow $\coloneqq \emptyset$ 
		
		\For {$u \in n$.Children} {
			\label{recStart}
			$\sigma'$, edgesCovered  $ \coloneqq$ \windUp(u, H',$T^*$) \;
			Add the mapping in $\sigma'$ to $\sigma$ \;       
			coveredBelow $= $ coveredBelow $\cup $ edgesCovered \;
		}
		\label{recEnd}
		
		coveredEdges $\coloneqq$ coveredBelow 
		
		\If(\Comment{make sure no node in $T_n$ is in $T^*$}) { $\not \exists u \in T_{n} $ s.t. $u \in N(T^*)$ } {
			
			\For { $e \in E(H')$  } {
				\label{coveredStart}
				\If {$e \subseteq$ $n$.Bag } {
					coveredEdges $=$ coveredEdges $\cup \{e\}$ 
				}
			}
			\label{coveredEnd}
			$sep \coloneqq $ $n$.Bag \Comment{we can use it as is, since we are outside of $T^*$}  \\
			comps $\coloneqq $ $[sep]$-components of coveredEdges  \Comment{coveredEdges here corresponds to  $H[T_n]$} \\
			\If { $|comps| = |n\text{.Children}| $  } {
				
				\texttt{scene} $\coloneqq$ new out-scene, mapping $H'$ to $sep$
				
				Add the mapping in \texttt{scene} to $\sigma$ \; 
				\label{addMappingUp}
			}

		}

		\textbf{return} $\sigma$, coveredEdges
	}

	\caption{SceneCreation -- Upward Phase}
	\label{alg:updateSCUp}
\end{algorithm}

Scenes avoid decomposing again parts of the hypergraph for which we already know a GHD.
Lemma~\ref{lem:keep} implies that the use of out-scenes is valid. Using in-scenes is more complex: we try to utilize them at most once to see if they help in finding a GHD of $\delta(H)$. If this leads to a reject case, we know that the scene will not be used again.
Therefore, in-scenes never harm the correctness of our approach.
We compute a scene mapping via a two-phase traversal of the old GHD and we require this old GHD to be in normal form so that we can determine which subtrees of the GHD ``cover'' certain components. This procedure can be seen in pseudocode in Algorithm~\ref{alg:updateSCDown} and Algorithm~\ref{alg:updateSCUp}, each detailing one of the two phases. We proceed to give an informal explanation below:

In the first \emph{downward phase}, the GHD is traversed top-down and the bags of the encountered nodes are used to ``replay a decomposition procedure''.
Starting at the root $u$ of the GHD, we create a new mapping $H \rightarrow u$ for the current hypergraph $H$. This can be seen in line~\ref{line:addMap}.
Then, we compute the $[B_u]$-components $C_1, \dots,C_\ell$, which we assign to the child nodes $u_1, \dots, u_\ell$ of $u$, where we have that $B_{u_i} \cap C_i \neq \emptyset$. 
Finally, we make a recursive call on each pair $(C_i, u_i)$. The recursive call and the matching from components to child nodes happens in Algorithm~\ref{alg:updateSCDown} between the lines~\ref{line:startLoop} to~\ref{line:endLoop}.
Due to the properties of GHDs, we know that each $[B_u]$-component matches with exactly one child node. 
If not, then the downward phase stops. This can only  happen  when considering nodes of $T^*$. 

The second \emph{upward phase} generates mappings in a different way.
It is called for any subtree below $T^*$, if the  downward phase stops at a non-leaf node. In the algorithm, we see this happening at lines~\ref{upCall1},~\ref{upCall2} and~\ref{upCall3}.
In this phase, instead of simulating a decomposition procedure,
we traverse the GHD in a bottom-up fashion and at every node $u$ we look
at the subtree $T_u$ to create the mapping $\{ e \in E(H) \mid e \cap (\bigcup_{u \in T} B_u) \neq \emptyset) \}
\rightarrow u$. In Algorithm~\ref{alg:updateSCUp} this is done by a recursive call on the child nodes of $u$, looking at all edges covered below them, as seen between lines~\ref{recStart} and~\ref{recEnd}. Between lines~\ref{coveredStart} to line~\ref{coveredEnd}, the algorithm computes the edges covered in the bag of $u$. Combined with the previous set, this gives us all edges that form $H[T_u]$. We are thus mapping $H[T_u]$ to $u$, as seen in line~\ref{addMappingUp}. This upward phase ensures we can make full use of Lemma~\ref{lem:keep} to consider all subtrees below $T^*$.

\begin{example}\label{ex:scenes}
	We shall consider here as our initial hypergraph $H_{P_2}$, seen in  Figure~\ref{fig:more-mods}. A GHD of $H_{P_2}$ is provided in Figure~\ref{fig:ex.magic.decomp}, we shall refer to it as $\mathcal{G}$ in the sequel.  We will use the same modification $\delta^{-1} \in \DelRel$ as introduced in Example~\ref{ex:magic}. Thus, using $\delta^{-1}(H_{P_2})$ and $\mathcal{G}$ we will create the scene mapping. 
	We start with the downward phase. 
	
	We start with the root node $u_1$ of $\mathcal{G}$ and create a scene mapping $\delta^{-1}(H_{P_2})\rightarrow u_1$. Next, we consider the $[B_{u_1}]$-components of $\delta^{-1}(H_{P_2})$, yielding components, $C_2 = \{w_3, w_6\}$ and $C_3 = \{ w_2, w_4 \}$.  We look for unique matching pairings of child nodes of $u_1$ and $[B_{u_1}]$-components. We see that $(B_{u_2} \setminus B_{u_1} )\cap V(C_2) = \{i,k\}$ and $(B_{u_3} \setminus B_{u_1} )\cap V(C_3) = \{d,f,g\}$. Since all components were matched, we proceed on the pairings ($C_2,u_2$) and  ($C_3,u_3$). 
	Next we consider the node $u_2$, and create the mapping $C_2 \rightarrow u_2$. We consider now the $[B_{u_2}]$-components of $C_2$. We get one component, $C_4 = \{ w_6\}$ and we have that $(B_{u_4} \setminus B_{u_2} )\cap V(C_4) = \{j,l\}$. Thus we proceed on the pairing ($C_4$, $u_4$). We create the mapping $C_4 \rightarrow u_4$. We note that there are no $[B_{u_4}]$-components of $C_4$, since $B_{u_4}$ already covers the entire component $C_4$. 
	We continue with ($C_3, u_3$). We create the mapping $C_3 \rightarrow u_3$. As before, we note that there are no $[B_{u_3}]$-components of $C_3$, as $B_{u_3}$ already fully covers $C_3$. Since the downward phase was never stopped at a non-leaf node, we do not proceed to the upward phase. To summarize, we get the following scene mapping: $\{  (\delta^{-1}(H_{P_2}) \rightarrow u_1 ), ( C_2 \rightarrow u_2), ( C_3 \rightarrow u_3),( C_4 \rightarrow u_4) \}$. We will see in Algorithm~\ref{alg:update} how this scene mapping can be used to speed up GHD computation under updates.
\end{example}

\begin{proposition}
	The Scene Creation algorithm, detailed in Algorithm~\ref{alg:updateSCDown}, has a  time complexity of $O(N^3)$ where $N$ is the size of the input hypergraph. 
\end{proposition}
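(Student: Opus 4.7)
The plan is to bound the total number of nodes the algorithm visits and the per-node cost, and then multiply. I will set $N$ to be the combined size of the input hypergraph (vertices plus edges); since the provided GHD is in normal form, its tree has $O(N)$ nodes as well, so the total input size is $\Theta(N)$.

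First I would show that each node of the GHD is visited only a constant number of times across the whole execution. The downward phase of Algorithm~\ref{alg:updateSCDown} is a classical top-down recursion: at each node $n$ it either recurses into each child exactly once (one recursive \texttt{SceneCreationDown} call per child, guarded by the component-matching condition), or it transfers control to the upward phase on $T_n$ via one of the calls at lines~\ref{upCall1},~\ref{upCall2}, or~\ref{upCall3}. Crucially, once the upward phase is launched on a subtree $T_n$, the downward phase is not invoked anywhere inside $T_n$. Since the upward phase of Algorithm~\ref{alg:updateSCUp} itself recurses once per child, each node is visited at most twice in total. Hence the total number of node-visits is $O(N)$.

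Next I would bound the per-visit cost by $O(N^2)$. The dominant subroutine is the computation of $[sep]$-components of the working subhypergraph $H'$ (line 6 of the downward phase and the analogous step in the upward phase). A straightforward implementation using a BFS/DFS over the intersection structure of $H'$ runs in $O((|V(H')|+|E(H')|)^2)=O(N^2)$ time. The other per-node operations---testing $\delta(n.\text{Bag}) \subseteq B(\delta(n.\text{Cover}))$, comparing $|\mathit{comps}|$ against $|n.\text{Children}|$, pairing components with children, checking whether any descendant of $n$ lies in $T^*$, inserting a mapping into $\sigma$, and (in the upward phase) accumulating the covered-edge set and testing $e \subseteq n.\text{Bag}$---can each be done in $O(N^2)$ time by enumerating vertex/edge pairs with a precomputed incidence structure. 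The inner double loop over \texttt{Children} $\times$ \texttt{comps} also contributes only $O(N^2)$ per visit because both cardinalities are $O(N)$. Membership of a node in $T^*$ can be answered in $O(1)$ by marking the nodes of $T^*$ once, up front, in $O(N)$ preprocessing time.

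Combining the two bounds yields $O(N) \cdot O(N^2) = O(N^3)$ as claimed. The step I expect to be the main obstacle is the amortization argument for the upward phase: I need to make sure that different triggers of \texttt{SceneCreationUp} operate on disjoint subtrees (which follows from the fact that once the downward phase aborts at $n$ and hands off to the upward phase on $T_n$, no other upward-phase invocation can reach inside $T_n$), and that the working hypergraph used by the upward phase---effectively $H[T_u]$ reconstructed from the covered-edge sets---never exceeds $N$ in size, so the $O(N^2)$ per-node budget still covers the component and covered-edge computations. Once this is in place, the cubic bound follows directly.
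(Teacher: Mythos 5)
Your proof is correct and follows essentially the same route as the paper's: both arguments bound the number of node visits by $O(N)$ (using the fact that the upward phase is triggered on disjoint subtrees never re-entered by the downward phase) and multiply by an $O(N^2)$ worst-case charge per node, yielding $O(N^3)$. The only cosmetic difference is that you bound the component computation per visit by $O(N^2)$ whereas the paper uses a linear-time connected-components bound and instead charges a full quadratic \texttt{SceneCreationUp} run to the node that triggers it; both accountings give the same cubic bound.
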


\begin{proof}
	We first analyse the complexity of the \texttt{SceneCreationUp} function from Algorithm~\ref{alg:updateSCUp}, as it forms essentially a subroutine of the \texttt{SceneCreationDown} function. We see that \texttt{SceneCreationUp} takes as input the subtree $T_n$ for a given input node $n$. Then for each node $n' \in T_n$, it computes the induced subgraph $H[T_{n'}]$. This can be seen by the fact that it first looks at all edges that are directly covered by the bag of $n'$, and combines this set with the set of all edges covered by any descendants of $n'$ in the GHD. This operation clearly takes linear space in the size of the subgraph $H'$, which we can consider to be bounded by the size of the input graph.  Next, we compute the connected components over this induced subgraph when seperated by the bag of $n'$. Computation of connected components on undirected graphs is known to be in linear time~\cite{DBLP:journals/siamcomp/Tarjan72}\footnote{Note that this paper deals with (strongly) connected components of directed graphs, which is a strict generalization of connected components in the undirected setting. Formally, there is a trivial reduction of connected component computation of undirected graphs to strongly connected component computation of directed graphs. }. Thus the complexity is $O(N * (N +N))$ for the entire run of \texttt{SceneCreationUp}. Thus we get the quadratic runtime $O(N^2)$. 
	
	Next we look at \texttt{SceneCreationDown}. This function runs over the entire GHD and takes initially the entire graph as input. For each node $n'$ it encounters during the recursion, it computes the connected components over the current subgraph. Then it recursively proceeds over the subtree rooted at $n'$. This continues until we hit a leaf node, or one of the conditions at lines~\ref{upCall1} or \ref{upCall2} is met or if the matching of components and nodes fails at line~\ref{upCall3}, and we stop the recursion and call the function \texttt{SceneCreationUp}. Note that this can only happen once for a given subtree, ending the recursion at that point. Thus for a given node $n'$, we either have a linear operation, or call the quadratic function \texttt{SceneCreationUp}. This gives us a simple upper bound of $O(N^3)$ for \texttt{SceneCreationDown}.
\end{proof}

\paragraph{Algorithm overview.}
A pseudo-code representation of our framework can be seen in Algorithm~\ref{alg:update}, called
\texttt{GHDUpdate}. As input we expect four items:
1) the hypergraph $H$ of the updated CSP,
2) a decomposition algorithm $D$, which we call \emph{decomposer}, 
3) a GHD $G$ of the CSP before the modification, and lastly,
4) the $\delta$-mutable subtree $T^*$.
The output is a GHD of $H$ of width $\leq k$, or a reject if none can be found. 
The decomposer $D$ takes as input a subhypergraph and a scene mapping.
It produces a GHD of $H$ of width $\leq k$, or rejects if none exists. 
Our algorithm initially computes a scene mapping $\sigma$, in line~2, using the aforementioned procedure.
Then, the recursive function \texttt{DecompUpdate} is called on $H$ and $\sigma$. 
At line~5, the function checks if a scene $\sigma(H')$ exists for the current subhypergraph $H'$. This check is a stateful operation, and can change the contents of $\sigma$ in the following way: for in-scenes, it will remove them from $\sigma$ after the first time they have been checked and returned. For out-scenes, no such removal takes place. 
If a scene was reported as being defined, then at line~6, the algorithm immediately fixes the current node of the GHD with $\sigma(H')$ and avoids the use of the decomposer, which would start an expensive search for a new bag.
At line~7, we separate $H'$ into the same $[B_u]$-components we encountered while computing the old GHD.
Now we make a recursive call on each of these components in lines~8~to~10, adding each GHD produced to the set of children of $u$. We then return the thus created GHD with $u$ at its root. 
Line~12 is executed only if $H'$ has never been encountered while building the old GHD.
In this case, the decomposer $D$ is called to find a GHD of $H'$ of width~$\leq k$.

This design ensures that in either case, whether the $\delta$-mutable subtree can be simply updated, or an
entirely new GHD needs to be computed, we can use the same strategy.
Moreover, in both cases we exploit the information provided by the old GHD.
The decomposer can be any existing GHD algorithm, it just needs to be adapted to use scene mappings.

\SetAlCapFnt{\large}
\SetAlCapNameFnt{\large}

\begin{algorithm}[tb]
	\DontPrintSemicolon
	\SetKwInOut{KwPara}{Parameter}
	\SetKwInput{KwType}{Type}
	
	\KwType{Node=(Bag: Vertex Set, Edge Cover: Set of Edges, Children: Set of Nodes)}
	\KwIn{$H$: Hypergraph, $D$: Decomposer, $G$: GHD, $T^*$: $\delta$-mutable subtree}
	\KwPara{\ $k$: width parameter}
	\KwOut{a GHD of $H$ with width $\leq k$, or \textbf{Reject} if none exists}
	
	\SetKwFunction{algo}{DecompUpdate} 
	\SetKwFunction{getComp}{ComputeComponents}
	\SetKwFunction{checkScenery}{SceneCreation}
	\SetKwFunction{windUp}{SceneCreationUp}
	\SetKwFunction{windDown}{SceneCreationDown}
	\SetKwFunction{buildGHD}{BuildGHD}
	\SetKwFunction{decomp}{$D$}
	\SetKwProg{myalg}{function}{}{}
	\Begin{ $\sigma$ $ \coloneqq $ \windDown{root $r$ of G, $H$, $T^*$} \;
		\textbf{return} \algo{$H$, $\sigma$} \Comment{initial call} \\ }
	\myalg{\algo{$H'$: Subgraph, $\sigma$: Scene Mapping} }{
		\If(\Comment{$\sigma$ will report in-scenes as being defined only once}) {$\sigma(H')$ is defined} 
		{ $u \coloneqq \sigma(H')$ \Comment{Use node from $T^*$} \\
			comps $\coloneqq $ $[u.\text{Bag}]$-components of $H'$  \;
			$u$.Children $\coloneqq \emptyset$\;
			\For{$c\in$ comps} { $u$.Children $=$ $u$.Children $\cup $ \algo{$c$, $\sigma$} \; }
			
			\textbf{return} $u$  \Comment{$u$ now forms the root of the output GHD} \\
			
		}
		\textbf{return} \decomp{$H', \sigma$} \Comment{using Decomposer for this subgraph} }
	
	\caption{GHDUpdate}
	\label{alg:update}
\end{algorithm}

\subsection{Applying the Framework to an Existing Decomposition Algorithm} 
\label{sub:applying_the_framework}

To demonstrate that our framework can be applied existing combinatorial algorithms for finding GHDs, we looked at BalancedGo by~\citet{DBLP:conf/ijcai/GottlobOP20}. As it was an open-source program, we modified it to make use of the $\delta$-mutable framework. Our extension of BalancedGo that supports our proposed strategy for update handling is available at \url{https://github.com/cem-okulmus/BalancedGoUpdate}.

It is notable that the actual code for the algorithm, specifically the det-$k$-decomp algorithm, is taken verbatim from BalancedGo with only a few lines having been added. This is in addition to general functionality for extracting the mutable subtree from a given decomposition with respect to an updated graph and for computing the scene mapping. The actual use of the scene mapping inside the algorithm is a trivial process, and we believe almost any existing or future  approach that actual computes GHDs via a combinatorial process can make use of it.

\section{Empirical Evaluation}
\label{sec:experiments}

In this section, we explore the potential of updating GHDs with the methods of Section~\ref{sec:cem}.
We describe our experiments, show their results and discuss the implications of our findings.

\subsection{Methodology \& Synthetic Update Generation}
\label{sec:setup}

We compared multiple approaches for updating GHDs upon elementary modifications: \emph{Update} and \emph{Classic}.
\emph{Update} consists in our implementation of the general strategy of Section~\ref{sec:cem}
on top of the \balancedgo{} program from~\cite{DBLP:conf/ijcai/GottlobOP20}.
\emph{Classic} uses the original \balancedgo{} program to compute a GHD of the modified hypergraph from scratch.
In addition to these two we also compared the \htdleo{} program from~\cite{DBLP:conf/ijcai/SchidlerS21}. Similar to \emph{Classic}, this program computes a GHD for the modified hypergraph from scratch.

More precisely, given a hypergraph $H$ and a GHD $G$ of $H$ of width $k$,
we applied an elementary modification $\delta$ to $H$ and compared the times taken by \emph{Update}, \emph{Classic} and \htdleo{} to output a GHD of $\delta(H)$ of width $\leq k$, if it exists.
Recall that \emph{Update} first computes the minimal $\delta$-mutable subtree $T^*$ from $G$ and then tries to build a GHD of $\delta(H)$ of width $\leq k$ by reusing the parts of $G$ that were not affected by $\delta$.

We conducted our experiments on the HyperBench dataset~\cite{10.1145/3440015}.
HyperBench is a large collection of hypergraphs from applications, benchmarks, and random generation
that has been successfully used in a variety of hypergraph decomposition experiments.
By using the \localbip{} implementation of \balancedgo{} and the results of~\cite{DBLP:conf/ijcai/GottlobOP20},
we determined the optimal \ghw of 1798 out of the 2035 CSPs of HyperBench with a timeout of 1 hour per instance.
Indeed, updating a GHD of optimal width is the hardest case.
We thus used these 1798 hypergraphs, their GHDs, and their \ghw as a basis for our experiments.

For each hypergraph $H$, we randomly generated five elementary modifications per each class from Section~\ref{sec:mods} as follows.
For \UnConst{} we introduce a new vertex into $\ell$ randomly chosen edges, where $\ell$ is the average (rounded up) degree of the original hypergraph. We generate \JJoin{} modifications by merging two random vertices and \UnJoin{} modifications by splitting a vertex $x$ into two vertices $y_1, y_2$:
in half of the edges incident to $x$, we replace $x$ by $y_1$ and in the other half we replaced $x$ by $y_2$.
Notably, \AddRel{} adds an edge with average (rounded up) rank such that all vertices in the new edge
are already part of some existing edge. That means that we generate challenging cases while avoiding
the easy case where most vertices in the new edge have no effect. For \DelRel{} a random edge is removed from the hypergraph.

Note that updating a GHD of optimal width in case of $\delta \in \Const{}$ is trivial.
Indeed, let $v \in V(H)$ be the vertex removed by $\delta$ and consider a GHD $G$ of $H$ of width $k$.
A GHD of $\delta(H)$ of width $\leq k$ can be easily obtained by removing $v$ from all bags $B_u$ of $G$. In total, this process produce 44950 instances, each consisting of the original hypergraph $H$ with a known minimal GHD and a modification $\delta$.

For each such instance we compute the hypergraph $\delta(H)$.
This $\delta(H)$ is used as input for \emph{Classic} and \htdleo{} to check whether $\ghw(\delta(H)) \leq \ghw(H)$.
We also compute the minimal $\delta$-\magic subtree from the decomposition of $H$. The subtree and the decomposition, in addition to the hypergraph $\delta(H)$, were provided as input to the \emph{Update} implementation to solve the corresponding instance of \minupdateProb{}. Note that the the time to compute the $\delta$-\magic subtrees is trivial (under 1ms) for all of our instances and thus not explicitly reported.
The raw data for our experiments is provided here\footnote{ \url{https://zenodo.org/record/6481125} }.

All experiments ran on an Intel Xeon CPU E5-2650 at 2.9 GHz with 264 GB RAM.
Nevertheless, each instance used only one core of the CPU and 1 GB RAM.
We set a timeout of 30 minutes for each run, i.e., we stopped the program if this threshold was crossed.

\subsection{Results \& Discussion}

\begin{table*}[t]
	\setlength\columnsep{1pt} 
	\caption{Statistics for \emph{Classic}, \emph{Update}, and \htdleo{} shown separately for each modification.
		\emph{Mean Classic}, \emph{Mean Update}, and \emph{Mean \htdleo{}} are in milliseconds.
		All non-integer numbers were rounded to two decimal places. Timeout was set to 30 minutes.}
	\label{tab:benchmark}
	\centering
	\begin{tabular}{r|rr|rrrr|rrr}
		\toprule
		\multirow{2}{*}{Operation}	& Positive	& Better	& Mean		& Mean & Mean				& Mean		& \multicolumn{3}{c}{Timeout}	\\
		& (\%)		& (\%)		& Classic	& Update		& \htdleo{} & Speedup	& Classic	& Update& \htdleo{} \\
		\midrule
		
		\AddRel{}  & 85.07 & 81.23 & 1105.8\phantom{0}   &  27.45 & 55027.02 & 40.28 & 1269 & 757 & 3657 \\
		\DelRel{}  & 99.48 & 88.12 &  552.88 &  10.28 & 60826.44 & 53.78 &  534 & 105 & 3480 \\
		\UnJoin{}  & 95.59 & 73.2\phantom{0}   &  714.17 &  82.42 & 58177.14 &  8.66 &  776 & 386 & 3677 \\
		\JJoin{}   & 90.34 & 85.67 &  534.48 &  12.52 & 69843.39 & 42.68 &  675 & 336 & 3268 \\
		\UnConst{} & 85.76 & 65.73 & 1338.82 & 223.33 & 64166.19 &  5.99 & 1330 & 952 & 3807 \\
		
		\midrule
		
		\textbf{Total} & 91.26 & 78.8\phantom{0}  & 795.41 & 36.54 & 61239.2\phantom{0}  & 21.77 & 4584 & 2536 & 17889 \\
		
		\bottomrule
	\end{tabular}
\end{table*}

\definecolor{pastelBlue}{HTML}{7692B8 }
\definecolor{pastelGreen}{HTML}{7BDFA9}

\pgfplotsset{every tick label/.append style={font=\tiny}}
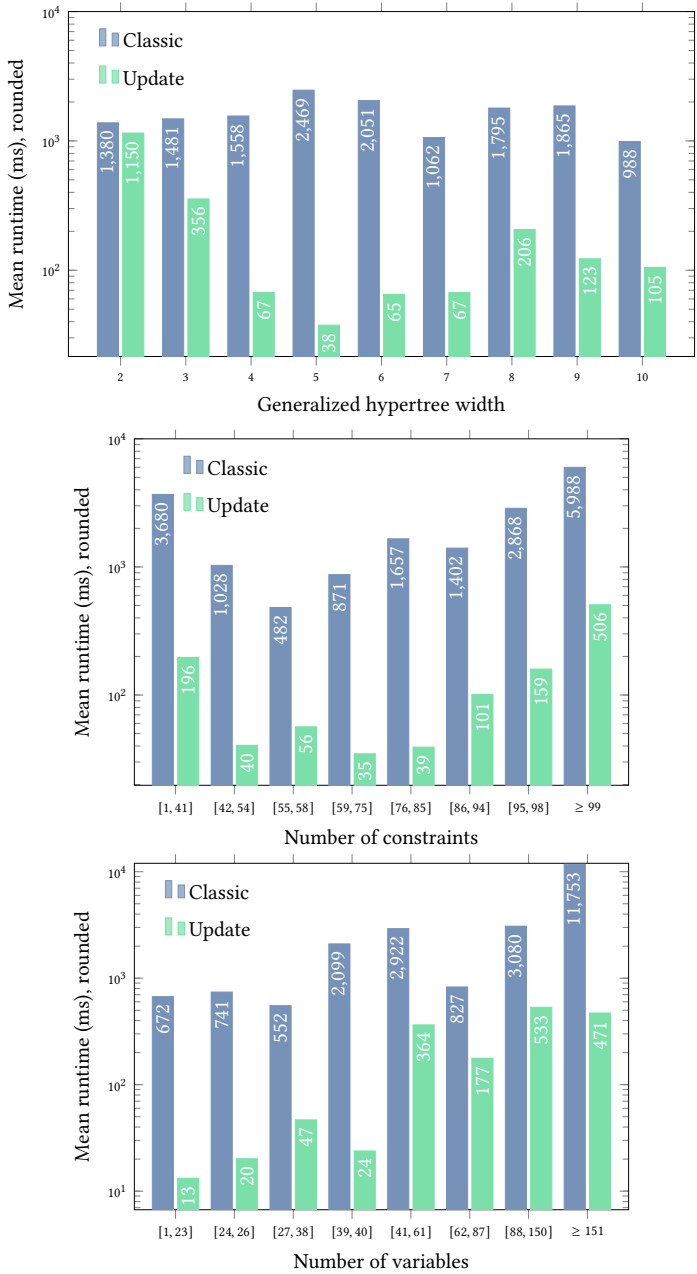
\begin{figure*}[tb]

\begin{tikzpicture} [transform shape, scale=0.8]
\begin{axis}[
        ymax=10000,
        ymode=log, ybar,  
        bar width=10pt,
        width=12cm,
        height=7.25cm,
        legend entries={Classic, Update},
        legend cell align=left,
        legend to name=grouplegend1,
        legend style={draw=none, fill opacity=0.7, text opacity = 1,row sep=5pt},       
        ylabel={Mean runtime (ms), rounded},
    xlabel={Generalized hypertree width},
        symbolic x coords={2,3,4,5,6,7,8,9,10},
        xtick=data,         
    nodes near coords  align={vertical},
       nodes near coords = \rotatebox{90}{{\pgfmathprintnumber[fixed zerofill, precision=0]
       {\pgfplotspointmeta}}},
        point meta=rawy,
    visualization depends on={meta < 5 \as \valueissmall},
    every node near coord/.append style={
            anchor={\ifdim\valueissmall  pt = 1 pt south\else north\fi},
            color={\ifdim\valueissmall  pt = 1 pt black\else white\fi}
    }
    ]
    \addplot + [pastelBlue]  coordinates {
(2,1379.57432902531)
(3,1481.2463898615672)
(4,1557.9139639731063)
(5,2468.8615134503357)
(6,2051.1537971130406)
(7,1061.5667465349316)
(8,1794.5281079490487)
(9,1864.6766750957297)
(10,988.3594741238286)
    };

    \addplot + [pastelGreen]  coordinates {
(2,1150.2326354122094)
(3,355.9549756466863)
(4,67.46601134280579)
(5,37.62323345150074)
(6,65.17301612824349)
(7,67.1355036304862)
(8,206.41829235661677)
(9,122.63006020788461)
(10,105.3194545906387)
    };
     \coordinate (leg) at (rel axis cs:0,0.75); 
\end{axis}

    \node[anchor= north west] at (leg){\pgfplotslegendfromname{grouplegend1}};

\end{tikzpicture}

\begin{tikzpicture} [transform shape, scale=0.8]
\begin{axis}[
        ymax=10000,
        ymode=log, ybar,
        legend entries={Classic, Update},
        legend cell align=left,
        legend to name=grouplegend2,
        legend style={draw=none, fill opacity=0.7, text opacity = 1,row sep=5pt},       
        ylabel={Mean runtime (ms), rounded},
        xlabel={Number of constraints},
        symbolic x coords=  {
        {$ [ 1 \comma 41 ] $},
        {$ [ 42 \comma 54 ] $},
        {$ [ 55 \comma 58 ] $},
        {$ [ 59 \comma 75 ] $},
        {$ [ 76 \comma 85 ] $},
        {$ [ 86 \comma 94 ] $},
        {$ [ 95 \comma 98 ] $},
        {$ \geq 99 $}
        },
        point meta=explicit symbolic,
        xtick=data, 
                nodes near coords,
    nodes near coords  align={vertical},
       nodes near coords = \rotatebox{90}{{\pgfmathprintnumber[fixed zerofill, precision=0]
       {\pgfplotspointmeta}}},
        point meta=rawy,
    visualization depends on={meta < 5 \as \valueissmall},
    every node near coord/.append style={
            anchor={\ifdim\valueissmall  pt = 1 pt south\else north\fi},
            color={\ifdim\valueissmall  pt = 1 pt black\else white\fi}
    },
    x post scale=1.2
    ]

    \addplot + [pastelBlue]  coordinates {    
({$ [ 1 \comma 41 ] $},3680.1244076933954)
({$ [ 42 \comma 54 ] $},1028.3625634075563)
({$ [ 55 \comma 58 ] $},482.3369649778562)
({$ [ 59 \comma 75 ] $},870.7453346856847)
({$ [ 76 \comma 85 ] $},1657.1071526629783)
({$ [ 86 \comma 94 ] $},1402.26290113947)
({$ [ 95 \comma 98 ] $},2868.441063534806)
({$ \geq 99 $},5988.299723956984)
    };
    \addplot + [pastelGreen]  coordinates {
({$ [ 1 \comma 41 ] $},196.49923969526645)
({$ [ 42 \comma 54 ] $},40.4543970970014)
({$ [ 55 \comma 58 ] $},56.4370927068148)
({$ [ 59 \comma 75 ] $},34.75891036666831)
({$ [ 76 \comma 85 ] $},39.16462908646663)
({$ [ 86 \comma 94 ] $},101.21157912587255)
({$ [ 95 \comma 98 ] $},159.37897359776625)
({$ \geq 99 $},506.14804946610576)
    }; 
     \coordinate (leg) at (rel axis cs:0.03,0.75); 
\end{axis}

    \node[anchor= north west] at (leg){\pgfplotslegendfromname{grouplegend2}}; 
\end{tikzpicture}
\begin{tikzpicture} [transform shape, scale=0.8]
\begin{axis}[
        ymax=12000,
        ymode=log, ybar,
        legend entries={Classic, Update},
        legend cell align=left,
        legend to name=grouplegend3,
        legend style={draw=none, fill opacity=0.7, text opacity = 1,row sep=5pt},       
        ylabel={Mean runtime (ms), rounded},
        xlabel={Number of variables},
        symbolic x coords=  {
        {$ [ 1 \comma 23 ] $},
        {$ [ 24 \comma 26 ] $},
        {$ [ 27 \comma 38 ] $},
        {$ [ 39 \comma 40 ] $},
        {$ [ 41 \comma 61 ] $},
        {$ [ 62 \comma 87 ] $},
        {$ [ 88 \comma 150 ] $},
        {$ \geq 151 $}
        },
        point meta=explicit symbolic,
        xtick=data, 
                nodes near coords,
    nodes near coords  align={vertical},
       nodes near coords = \rotatebox{90}{{\pgfmathprintnumber[fixed zerofill, precision=0]
       {\pgfplotspointmeta}}},
        point meta=rawy,
    visualization depends on={meta < 5 \as \valueissmall},
    every node near coord/.append style={
            anchor={\ifdim\valueissmall  pt = 1 pt south\else north\fi},
            color={\ifdim\valueissmall  pt = 1 pt black\else white\fi}
    },
    x post scale=1.2
    ]

    \addplot + [pastelBlue]  coordinates {    
({$ [ 1 \comma 23 ] $},671.7205979665981)
({$ [ 24 \comma 26 ] $},741.2125315885274)
({$ [ 27 \comma 38 ] $},552.1159777933979)
({$ [ 39 \comma 40 ] $},2099.4176640925984)
({$ [ 41 \comma 61 ] $},2921.6048299974505)
({$ [ 62 \comma 87 ] $},827.1860816524298)
({$ [ 88 \comma 150 ] $},3079.8842997861457)
({$ \geq 151 $},11753.33254210318)
    };

    \addplot + [pastelGreen]  coordinates {
({$ [ 1 \comma 23 ] $},13.223968949096703)
({$ [ 24 \comma 26 ] $},20.16782173754198)
({$ [ 27 \comma 38 ] $},46.71889310429051)
({$ [ 39 \comma 40 ] $},23.857279342735875)
({$ [ 41 \comma 61 ] $},364.3403531240072)
({$ [ 62 \comma 87 ] $},177.19510392112198)
({$ [ 88 \comma 150 ] $},533.1911348572197)
({$ \geq 151 $},470.86851373322594)
    }; 
     \coordinate (leg) at (rel axis cs:0,0.75); 
\end{axis}

    \node[anchor= north west] at (leg){\pgfplotslegendfromname{grouplegend3}}; 

\end{tikzpicture}

\caption{Geometric mean runtimes (log. scale) of \emph{Classic} and \emph{Update} w.r.t. \ghw and instance size.}
\label{speedup}
\end{figure*}

To reduce the effect of variance, we only report on the 26013 instances for which it took \emph{Classic} more than 15 milliseconds to compute a decomposition.
In the ``easier'' cases, it is reasonable to just use \emph{Classic} instead of the more sophisticated \emph{Update}.
If we move the threshold to any $t>15$, the superiority of \emph{Update} becomes even clearer.
This suggests that our approach is even more fruitful when applied to ``hard'' cases.

Since our \emph{Update} approach is built on top of \emph{Classic}, we will use only the latter as a baseline for our experiments. As it will be evident, this is also justified by the fact that \emph{Classic} performs better than \htdleo{} on average.

The results for each class of modification are shown in Table~\ref{tab:benchmark}.
The column \emph{Positive} contains the percentage of cases where the width of the hypergraph did not increase due to the modification.
The column \emph{Better} contains the percentage of instances in which \emph{Update} outperformed \emph{Classic}.
In the next three columns we record the geometric means (in milliseconds) for \emph{Classic}, \emph{Update}, and \htdleo{}.
We then report on the speedup, which is defined as the ratio between \emph{Classic} and \emph{Update} runtimes, via the geometric mean of all speedups.
In the last columns, we compare the number of \emph{exclusive timeouts} for each solver. For instance, the column \emph{Classic} reports on the number of instances that timed out for \emph{Classic}, but neither for \emph{Update} nor \htdleo{}.
Finally, for each operation, we show the number of instances that timed out for all methods.
Since computing $T^*$ takes far less than a ms for all of our instances, the time is not reported explicitly.

In order to compare the different approaches, we adopt the same methodology that was adopted in~\cite{10.1145/3440015}, i.e., we compare mean running times and number of instances that timed out.
Overall, Table~\ref{tab:benchmark} clearly demonstrates the significant benefits of using \emph{Update}.
For every modification class, the \emph{Update} mean time is significantly lower than the other approaches.
The mean speedups are very high throughout all modifications even in the most difficult cases, i.e., \UnJoin{} and \UnConst{}.
We also see that \htdleo{} seems to have a very hard time with most of the test instances, and has by far the most timeouts and the larger mean times in comparison with the other two methods.

Interestingly, \emph{Update} seems to be particularly well suited for \DelRel{} and \JJoin{} modifications.
In theory, \DelRel{} is problematic since the deleted edge could have covered an arbitrarily complex structure.
However, it seems that this occurs rarely in practice and deleting an edge simplifies the hypergraph instead.
This is clearly apparent in the observation that 99.48\% of \DelRel{} instances were positive, i.e., the width did not increase by deleting a constraint.

The \emph{Better} column shows that \emph{Update} is faster than \emph{Classic} in 78.8\% of cases on average.
This is despite the fact that many instances were solvable by \emph{Classic} in less than 40 milliseconds
and \emph{Update} has an additional overhead because of the scene mapping creation.
Another source of slowdowns are negative instances ($\ghw(\delta(H)) > k$), where the entire search space needs to be explored.
In this case, the scene mapping is of little use and its creation only causes delays.
Moreover, the \emph{Timeout} columns show that \emph{Update} solves $\approx 94\%$ of the instances, while \emph{Classic} and \htdleo{} solve $89.2\%$ and $60.2\%$ of them, respectively.

The \emph{Positive} column shows that elementary modifications do not change the width of the hypergraph in 91.26\% of cases.
This suggests that the classes we defined are indeed natural and simple:
they capture small hypergraph modifications that do not increase the complexity of the related CSP.

We also investigated how our approach behaves with increasing $\ghw$ of the input decomposition as well as in relation to hypergraph size (in number of constraints and vertices, separately). The results of both studies are summarized in Figure~\ref{speedup}.
Note that the runtimes are given on a logarithmic scale. Since \htdleo{} is more than one order of magnitude slower than the other two methods, we do not report on it.
We see that beginning from width 3, \emph{Update} provides significantly better mean
runtimes than \emph{Classic}, and the speedup generally increases as well.

We observe that the superiority of \emph{Update} becomes more pronounced as the input
CSPs (hypergraphs) become larger. Intuitively this is explained by the fact that the modification
usually affects a smaller fraction of the hypergraph as the size increases. Hence, if it is possible
to replace the mutable subtree and reuse much of the old decomposition, as shown in Section~\ref{sec:cem},
then the strengths of \emph{Update} are emphasized. In practice this is
particularly promising since recomputation of a GHD is problematic particularly for larger
instances.

\section{Conclusion and Future Work}
\label{sec:conclusion}

We introduced the problem of updating GHDs under modifications to the underlying CSP. 
We presented theoretical foundations -- in the form of $\delta$-\magic subtrees -- as well as concrete algorithmic strategies viable for all modifications and top-down decomposition algorithms.
We implemented these strategies on top of an existing competitive implementation from the literature, allowing it to make use of old decompositions to improve computation of GHDs in the update scenario.
Though experimental evaluation we verified that our approach, on average, greatly speeds up the computation of GHDs in response to elementary modifications.

This paper represents only a first step into this new challenging problem and much is left to be done.
For an immediate next step, we are particularly interested in how the input decomposition affects update performance.
Particular decompositions, e.g., \emph{balanced separator decompositions}~\cite{10.1145/3440015}, may affect the shape and size of \magic subtrees and therefore also affect how easily parts of a tree can be reused. 
Furthermore, we see potential in identifying specific modifications where \minupdateProb{}, or some variation of it, can in fact be solved efficiently. Despite our negative results for elementary modifications, it may be of interest to identify relevant special cases of the elementary modifications that allow for easier updates.

\bibliographystyle{ACM-Reference-Format}
\bibliography{ghd-update}

\appendix

\section{Proof of Theorem~\ref{thm:hard}}
\label{sec:newproof}

The argument will require details of the reduction of \textsc{3-Sat} to checking whether a hypergraph has \ghw at most $2$ by~\citet{DBLP:journals/jacm/GottlobLPR21}. The reduction is highly technical and we recall the construction and key facts here for convenience. For full details we refer to~\cite{DBLP:journals/jacm/GottlobLPR21}. It will be convenient to use $[n]$ for integer $n$ to refer to the set $\{1,2,\dots,n\}$.

\subsection{Reducing \textsc{3-Sat} to Checking $\ghw \leq 2$}
The hypergraph $H$ to be constructed consists of 3 main parts: two versions of a gadget introduced below and a subhypergraph encoding the clauses of the \textsc{3-Sat} instance. We first fix some notation. We write $[n]$ for the set $\{1,\dots,n\}$. Extending this common notation, we write $[n;m]$ for the set of pairs $[n]\times [m]$. Furthermore, we refer to the element $(1,1)$ of any set $[n;m]$ as $\min$ and $(n,m)$ as $\max$.

For two disjoint sets $M_1, M_2$ and $M = M_1 \cup M_2$ the construction makes use of a gadget with vertices $V=\{a_1,a_2,b_1,b_2,c_1,c_2,d_1,d_2\} \cup M$ and edges $E_A \cup E_B \cup E_C$ as follows:
\begin{align*}
	E_A = \{ &\{a_1,b_1\} \cup M_1, \{ a_2, b_2 \} \cup M_2, 
	\{a_1,b_2\},\{a_2,b_1\}, \{a_1, a_2\} \} \\ 
	E_B = \{ &\{b_1,c_1\} \cup M_1, \{ b_2, c_2 \} \cup M_2, \{b_1,c_2\},
	\{b_2,c_1\}, \{ b_1,b_2\}, \{c_1,c_2\} \} \\ 
	E_C = \{ & \{c_1,d_1\} \cup M_1, \{ c_2, d_2 \} \cup M_2, 
	\{c_1,d_2\},
	\{c_2,d_1\}, \{ d_1,d_2\} \}  
\end{align*}

Let $\varphi = \bigwedge_{j=1}^m 
(L_j^1 
\vee L_j^2 \vee L_j^3)$ be an arbitrary instance of  \textsc{3-Sat} with $m$ clauses 
and 
variables 
$x_1,\ldots,x_n$. 
In addition to the vertices for two of the aforementioned gadgets, the reduction uses the following sets to construct the target hypergraph $H$:
\begin{description}
	\item[$Y, Y', Y_\ell, Y'_\ell$:] The sets $Y = \{ y_1, \ldots, y_n\}$ and
	$Y' = \{ y'_1, \ldots, y'_n\}$ will encode the truth values of the
	variables of $\varphi$. $Y_{\ell}$ ($Y'_{\ell}$) are the sets
	$Y \setminus \{y_{\ell}\}$ ($Y' \setminus \{ y'_{\ell}
	\}$).
	\item[$A, A', A_p, A'_p$:]
	We have sets
	$A = \{ a_p \mid p \in [2n+3; m] \}$ and
	$A' = \{ a'_p \mid p \in [2n+3;m]\}$ with the following important subsets:
	\begin{align*}
		A_p &= \{ a_{\min},\ldots,a_p \} & 
		\overbar{A_p} &=  \{ a_{p},\ldots,a_{\max} \} \\
		A'_p &= \{ a'_{\min},\ldots,a'_p \} & 
		\overbar{A'_p} &= \{ a'_{p},\ldots,a'_{\max} \} 
	\end{align*}
	\item[$S$:] First define $Q = [2n+3;m] \cup
	\{(0,1),(0,0),(1,0)\}$. Then, $S$ is defined as
	$Q \times \{1,2,3\}$. The elements in $S$ are pairs, which we denote
	as $(q \mid k)$. The values $q \in Q$ are themselves pairs of
	integers $(i,j)$.
	\item[$S_p$:]
	For $p \in [2n+m;m]$ we write $S_p$ for the set $\{ (p, 1), (p,2), (p,3) \}$. And $S^k_p$ for the singleton $\{ (p \mid k) \}$ for $k \in \{1,2,3\}$.
\end{description}

The vertices of $H$ are as follows.
\begin{align*}
	V(H) = &\; S \;\cup\; A \;\cup\; A' \;\cup\;  Y \;\cup\; Y' 
	\;\cup\; \{ 
	z_1, 
	z_2 \} \;\cup \\
	&\; \{ a_1, a_2, b_1, b_2, c_1, c_2, d_1, d_2,
	a'_1, a'_2, b'_1, b'_2, c'_1, c'_2, d'_1, d'_2 \}.
\end{align*}

The edges of $H$ are defined below. First, we take two copies of the
gadget $H_0$ described above:
\begin{itemize}
	\item Let $H_0 = (V_0, E_0)$ be the hypergraph of the lemma described at the beginning of the section
	with $V_0=\{a_1,a_2,b_1,b_2$, $c_1,c_2,d_1,d_2\} \cup M_1 \cup M_2$ and 
	$E_0 = E_A \cup E_B \cup E_C$, where we set $M_1 = S \setminus 
	S_{(0,1)} \cup \{z_1\}$ and $M_2 = Y \cup 
	S_{(0,1)} \cup \{z_2\}$.
	\item Let $H'_0 = (V'_0, E'_0)$ be the corresponding hypergraph, 
	with $V'_0=\{a'_1, a'_2, b'_1,$ $b'_2, 
	c'_1, c'_2, d'_1, d'_2\}\cup M'_1 \cup M'_2$ and $E'_A, E'_B, E'_C$ 
	are the primed versions of the 
	edge sets $M'_1 = S 
	\setminus S_{(1,0)} \cup \{z_1\}$ and $M'_2 = Y' \cup S_{(1,0)} \cup  
	\{z_2\}$.
\end{itemize}
Beyond the gadget $H$ contains the following edges.
\begin{itemize}
	\item $e_{p} = A'_p \cup \overbar{A_p}$,
	for $p \in [2n+3;m]^-$,
	\item $e_{y_i} = \{ y_i, y'_i \}$, for $1 \leq i \leq n$,
	\item For $p = (i,j)  \in [2n+3;m]^-$ and $k \in \{1,2,3\}$:
	\begin{align*}
		e^{k,0}_p = & \begin{cases}
			\overbar{A_p} \cup (S\setminus S^{k}_p) 
			\cup Y  \cup \{z_1\} & \mbox{if } L^k_j = x_{\ell} \\
			\overbar{A_p} \cup (S\setminus S^{k}_p)
			\cup Y_{\ell} \cup \{z_1\} & \mbox{if } 
			L^k_j = \neg x_{\ell}, 
		\end{cases} \\
		e^{k,1}_p = & \begin{cases}
			A'_p \cup S^{k}_p \cup 
			Y'_{\ell} \cup \{ z_2\} & \mbox{if } L^k_j = x_{\ell} \\
			A'_p \cup S^{k}_p \cup 
			Y' \cup \{z_2\} & \mbox{if } L^k_j = \neg x_{\ell}. 
		\end{cases}        
	\end{align*}
	\item $e^0_{(0,0)}=
	\{ a_1 \} \cup A \cup S \setminus S_{(0,0)} \cup Y \cup \{ z_1\}
	$
	\item $e^1_{(0,0)} = S_{(0,0)} \cup Y' \cup \{ z_2\}$
	\item $e^0_{\max} =
	S \setminus S_{\max} \cup Y
	\cup \{ z_1\}$
	\item $e^1_{\max} = \{a'_1 \} \cup A' \cup  S_{\max} \cup 
	Y' \cup \{z_2\}$
\end{itemize}

\paragraph{The key GHD}
The hypergraph construction above is such that only certain (if any)
width 2 GHDs are possible for $H_\varphi$. In particular, in~\cite{DBLP:journals/jacm/GottlobLPR21} it is shown extensively that any width 2 GHD of $H_\varphi$ needs to be a line, i.e., it has no branching. Furthermore, the gadget construction is used to specify two blocks of nodes that need to be at the two ends of the line, indirectly fixing the possible nodes between them. Here it is enough to consider the standard GHD that can be constructed when a satisfying assignment $\sigma$ for $\varphi$ is known. Recall, $x_1, \dots, x_n$ are the variables of $\varphi$ and let
\[
Z = \{ y_i \in Y \mid \sigma(x_i)=1 \} \cup \{ y'_i \in Y' \mid \sigma(x_i) = 0 \}.
\]
The following line graph, together with $B_u$ and $\lambda_u$ labels from Table~\ref{tab:np_decomp}, describe a width 2 GHD $\mathcal{D}_\varphi$ for $\varphi$.
\[
u_C - u_B - u_A - u_{\min\ominus1} - u_{(1,1)} - \cdots - u_{(2n+3,m-1)} - u_{\max} - u'_A - u'_B - u'_C
\]
In the following arguments we will make use of this basic structure to argue the existence of width 2 GHDs for other cases.
\begin{table*}
	\caption{Definition of $B_u$ and $\lambda_u$ for GHD of $H$.}
	\label{tab:np_decomp}
	\centering
	\begin{tabular}{|c|c|c|}
		\hline
		$u \in T$ & $B_u$ & $\lambda_u$ \\ 
		\hline 
		$u_C$ & $\{d_1, d_2, c_1, c_2\} \cup Y \cup S \cup \{z_1,z_2\}$ & 
		$\{c_1,d_1\}\cup M_1$, 
		$\{c_2,d_2\}\cup M_2$   \\
		$u_B$ & $\{c_1, c_2, b_1, b_2\} \cup Y \cup S \cup \{z_1,z_2\}$ & 
		$\{b_1,c_1\}\cup M_1$, 
		$\{b_2,c_2\}\cup M_2$  \\
		$u_A$ & $\{b_1, b_2, a_1, a_2\} \cup Y \cup S \cup \{z_1,z_2\}$ & 
		$\{a_1,b_1\}\cup M_1$, 
		$\{a_2,b_2\}\cup M_2$  \\
		$u_{\min\ominus 1}$ & $\{a_1 \} \cup A \cup Y \cup S \cup Z 
		\cup \{z_1,z_2\}$ &
		$e^0_{(0,0)}, e^1_{(0,0)}$  \\
		$u_{p \in [2n+3;m]^-}$ & $A'_p \cup \overbar{A_p} \cup 
		S \cup Z \cup \{z_1,z_2\}$ & $e^{k_p,0}_p, e^{k_p,1}_p$  \\
		$u_{\max}$ & $\{ a'_1\} \cup A' \cup Y' \cup S \cup Z \cup \{z_1,z_2\}$ & 
		$e^0_{\max}, e^1_{\max}$  \\
		$u'_A$ & $\{a'_1, a'_2, b'_1, b'_2\} \cup Y' \cup S \cup \{z_1,z_2\}$ & 
		$\{a'_1,b'_1\} \cup 
		M_1'$, $\{a'_2,b'_2\}\cup M'_2$  
		\\
		$u'_B$ & $\{b'_1, b'_2, c'_1, c'_2\} \cup Y' \cup S \cup \{z_1,z_2\}$ & 
		$\{b'_1,c'_1\} \cup 
		M_1'$, $\{b'_2,c'_2\}\cup M'_2$  
		\\
		$u'_C$ & $\{c'_1, c'_2, d'_1, d'_2\} \cup Y' \cup S \cup \{z_1,z_2\}$ & 
		$\{c'_1,d'_1\} \cup 
		M_1'$, $\{c'_2,d'_2\}\cup M'_2$  
		\\
		\hline
	\end{tabular}
\end{table*}

\subsection{Adapting the Argument to Updates}
We now show how to use the construction for the reduction from \textsc{3-Sat} to checking $\ghw \leq 2$ to the update problem restricted to the stated classes of atomic updates.
Recall that for an instance $\varphi$ of \textsc{3-Sat}, the constructed hypergraph $H_\varphi$ has $\ghw(H_\varphi)=2$ if $\varphi$ is satisfiable, and width 3 otherwise. Our plan is to manipulate $H_\varphi$  in such a way that we can efficiently construct a width 2 GHD of $H' = \delta^{-1}(H_\varphi)$ (and $H'$ is not acyclic) for $\delta$ in the respective classes.
If such a modification $\delta \in \Delta$ always exists, then the satisfiability of $\varphi$ many-one reduces to the decision version of \minupdateProb{}$(\Delta)$ for inputs $H'$, $\delta$, and the width 2 GHD of $H'$. We are able to give such a reduction for the \DelRel{} and \UnJoin{} case. We will describe below how to handle the other operations using a slightly more involved strategy.

\textbf{\DelRel{}}.
Here our goal is easy to reach. To obtain $H'$ it is sufficient to add a large edge $e^* = V(H)\setminus \{d_1\}$ to $H_\varphi$ is sufficient. Since $d_1$ has at least two distinct edges to other vertices (which are in $e^*$) we see that the resulting $H'$ is not acyclic. Clearly then $\ghw(H')=2$ and it is trivial to construct an appropriate width 2 GHD.

\textbf{\UnJoin{}}.
Let $H'$ be the hypergraph performing a \JJoin modification on vertices $z_2$ and $a'_1$ in $H_\varphi$, (using $a'_1$ to represent the vertex after the join). In particular this will merge edges containing $M'_1$, $M_2$ and $M'_2$ in the two gadgets as well as all edges of form $e_p^{k,1}$ (as well as some linking edges in the gadget). The resulting edge $e^*$ is of the form
\[
e^* = S \cup  Y \cup Y' \cup A' \cup \{a_1', a'_2, b'_1, b'_2, c'_2, d'_2, b_2, c_2, d_2, z_1\}
\]
The GHD $\mathcal{D}_\varphi$ given above can then be adapted in the following manner to yield a GHD of width 2 for $H'$. Replace all edges that contained $z_2$ or $a'_1$ in covers $\lambda_u$ by the new $e^*$. Note that this affects all nodes in the GHD. Then add $e^*$ to the bag of every node.
Clearly, all edges of $H_\varphi$ are still covered and our new edge $e^*$ is covered in every node. Finally, $H'$ is not acyclic as some cycles in the gadgets remain untouched by the merge. For example, the edges $\{c_1, d_2\}, \{c_1, c_2\}$ form an $\alpha$-cycle with $e^*$ (note that $e^*$ does not contain $c_1$).

\subsection{The Complex Cases -- \UnConst{}, \AddRel{}, and \JJoin{}}
For the other modification classes we will now slightly change our strategy and instead show how to decide satisfiability of \textsc{3-Sat} via a polynomial number of calls to \minupdateProb{}. Note that we use the returned new decompositions from the calls and thus can not directly derive \np-hardness by Turing reduction of the \minupdateProb{} decision problem.
Formally, for a class $\mathcal{C}$ of updates, instead of a single $H'$ we construct a sequence $H'_0,\dots,H'_\ell$ with $H'_\ell = H_\varphi$, $\ell$ polynomial in the size of $\varphi$, and for each $i \in [\ell]$ there is a $\delta_i\ \in \mathcal{C}$ such that $\delta_i(H'_{i-1}) = H'_i$. We will show that for all $0 \leq i \leq \ell$, $\ghw(H_i) \leq 2$ if and only if $\varphi$ is satisfiable.

Suppose now that we can construct such $H'_0$, sequence of modifications $\delta_1,\delta_2,\dots,\delta_\ell$, as well as a width 2 GHD for $H'_0$ efficiently. If \minupdateProb{}($\mathcal{C}$) were feasible in polynomial time, then we could verify $\ghw(H_\varphi) \leq 2$ in polynomial time by iteratively constructing a GHD for it from successive calls to \minupdateProb{}, starting from the known GHD of $H'_0$. As stated previously, $\varphi$ is satisfiable if and only if $\ghw(H_\varphi)\leq 2$ and thus this would yield a polynomial procedure for solving \textsc{3-Sat}.

\textbf{\UnConst{}}. 
The desired sequence of hypergraphs and modifications is defined via  $\delta_i^{-1}$ being the modification that removes vertex $y'_i$ from $H'_i$. Thus, $\ell=n$ and all $\delta_i \in \UnConst{}$.
Observe that \UnConst{} modifications can never decrease \ghw, that is $\ghw(H'_{i-1}) \leq \ghw(H'_i)$ for all $i \in [\ell]$. This can be easily observed since their inverse, the modifications of \Const{}, produce an induced subhypergraph and thus can not increase \ghw. Thus, we see that if $\ghw(H'_0)=2$ and $\ghw(H'_\ell)=2$ (which is equivalent to $\varphi$ being satisfiable), then $\ghw(H'_\ell)=2$ for all $1 \leq i \leq \ell$.  We therefore see that this gives us a sequence of modifications as described above.

Recall that in the original GHD $\mathcal{D}_\varphi$ given above, the set $Z$ is derived from a satisfying assignment for $\varphi$. For $H'_0$ we can simply set $Z=\emptyset$ (and remove all $y'_i$ from the bags) to obtain a width 2 GHD. Only the edges $e_{y_i}$ relied on $Z$ to be covered in $H_\varphi$, but in $H'_0$ they are all singletons $\{y_i\}$ and thus always covered in the first gadget. Hence, we can construct a width 2 GHD for $H'_0$ (which is cyclic) and as described above, a linear number of calls to \minupdateProb{}($\UnConst{}$) are sufficient to decide whether $\varphi$ is satisfiable.

\textbf{\AddRel{}}.
We again argue via a sequence
$H'_0, \dots, H'_\ell$ of hypergraphs and modifications
$\delta_1,\dots,\delta_\ell \in \AddRel{}$ with
$\delta_i(H'_{i-1})=H'_i$.
In contrast to the \UnConst{} case, \AddRel{} modifications can decrease \ghw and our use of such a sequence thus depends on particular properties of our choice of modification sequence.

We define our sequences via $\delta^{-1}_i$ being the modification (in \DelRel{}) that deletes edge $e_{y_i}$ from $H_\varphi$. The construction of a width 2 GHD for $H'_0$ is the same as for $\mathcal{D}_\varphi$ above but with $Z= \emptyset$. The function of $Z$ is only to connect $u_{\min\ominus1}$ and $u_{\max}$ in a way such that every $e_{y_i}$ is covered in either one of the respective bags. Since $H'_0$ no longer contains those edges, this is still satisfied with $Z=\emptyset$. It is not difficult to verify that $Z$ was not used to cover any other edges in $H_\varphi$ and therefore the correctness of the resulting GHD. Now, suppose that $\ghw(H_\varphi)=2$, then there exists some width 2 GHD $\mathcal{D}_\varphi$ of the form shown above. Note that no $e_{y_i}$ edge is used in a $\lambda_u$ set for this GHD but all of them are covered in some bag. In consequence, $\mathcal{D}_\varphi$ is also a GHD for every hypergraph $H'_i$ in our sequence, meaning every hypergraph in the sequence has \ghw 2 iff $\ghw(H_\varphi)=2$. Note that while it is hard to find $\mathcal{D}_\varphi$, the key point here is that a width 2 GHD for the special case $H'_0$ can always be found easily. 

We can now proceed as in the \UnConst{} case. Start from input $H'_0$, $\delta_1$, and the width 2 GHD of $H'_0$ as described above and call \minupdateProb{}($\AddRel{}$) to find a width 2 GHD for $H'_1$. Iterating this process, we either arrive at some $H'_i$ for which $\ghw(H'_i)>2$ and reject or we show that $\ghw(H_\varphi)\leq 2$. In the former case, we have by the argument above that then also $\ghw(H_\varphi)>2$. Thus, we can correctly decide whether $\ghw(H_\varphi)\leq 2$ -- and therefore also \textsc{3-Sat} -- using linearly many calls of \minupdateProb{}($\AddRel{}$).

\textbf{\JJoin{}} We construct the initial hypergraph $H'_0$ from
$H_\varphi$, by replacing every edge $e_{y_i} = \{y_i, y'_i\}$ by
the edge $e^*_i = \{y_i, \star_i\}$. Consider the sequence
$\delta_1,\dots,\delta_n$ such that $\delta_i \in \JJoin{}$ merges
$\star_i$ into $y'_i$, i.e., $\star_i$ is replaced in every edge by
$y'_i$.  It is easy to see that $H'_n = H_\varphi$ and if
$\ghw(H'_i) = 2$ for all $i\in[n]$, then $\ghw(H_\varphi)=2$. We will
first argue that $H'_0$ has \ghw 2 and that witnessing GHD can be
found easily. Then we show that if $\ghw(H_\varphi)=2$, then
$\ghw(H'_i)=2$ for all $i\in [n]$. All together this again means that
it is possible to decide \textsc{3-Sat} using a linear number of
calls to \minupdateProb{}($\JJoin{}$).

The decomposition for $H'_0$ is again based on $\mathcal{D}_\varphi$ with $Z=\emptyset$. Observe that $\mathcal{D}_\varphi$ does not use any $e_{y_i}$ as a cover and thus the only concern with adapting it for $H'_0$ is making sure that every $e^*_i$ is covered in some bag. To that end, add nodes $u^*_i$, for $i \in[n]$ as children of $u_{\min\ominus 1}$ with $B_{u^*_i}= e^*_i$ and cover $\lambda_{u^*_i} = \{e^*_i\}$. The connectedness condition is clearly not violated by these new nodes and every $e^*_i$ is now covered. Let $\mathcal{D}_0$ be the GHD described here and note it clearly has width 2 (and $H'_0$ is not acyclic).

To see that every $H'_i$ for $1 \leq i \leq n$ has $\ghw(H'_i)=2$,
if $\ghw(H_\varphi)=2$, we now proceed in similar fashion to the
argument for $\AddRel{}$. Since we assume that $\ghw(H_\varphi)=2$, there exists a satisfying assignment $\sigma$ for the variables of $\varphi$.
Let $Z=\{y_i \mid \sigma(x_i)=1\} \cup \{y'_i \mid \sigma(x_i)=0\}$ be the set as in the original definition of $\mathcal{D}_\varphi$.
Let $\mathcal{D}_i$ be the GHD obtained from $\mathcal{D}_\varphi$ using this $Z$ and for all $j$ s.t. $i < j \leq n$, add nodes $u^*_j$ as children of $u_{\min \ominus 1}$ as in the construction of $\mathcal{D}_0$ above. By construction, $H'_i$ contains the edges $e_{y_j}$  for $j\leq i$ and edges $e^*_j$ for $j > i$. It is then straightforward to verify that $\mathcal{D}_i$ indeed is a width 2 GHD for $H'_i$. Thus, as described above, we can use a linear number of calls to \minupdateProb{}($\JJoin{}$) to decide \textsc{3-Sat}. Consequently, if $\ptime \neq \np$, then \minupdateProb($\JJoin{}$) can not be solvable in polynomial time.

\end{document}